\documentclass{article}
\usepackage[preprint]{colm2026_conference}

\usepackage[T1]{fontenc}
\usepackage{graphicx}
\usepackage{booktabs}
\usepackage{amsmath}
\usepackage{amssymb}
\usepackage{amsthm}
\usepackage{microtype}
\usepackage{hyperref}
\usepackage{url}
\usepackage{tikz}
\usetikzlibrary{matrix,positioning,calc}
\usepackage{pgfplots}
\usepackage{pgfplotstable}
\pgfplotsset{compat=1.18}
\usepackage{multirow}
\usepackage{xcolor}
\usepackage{colortbl}
\usepackage{array}
\usepackage{float}
\usepackage{placeins}
\usepackage{subcaption}

\newtheorem{proposition}{Proposition}

\newtheorem{definition}{Definition}

\pgfplotsset{
  hm colormap/.style={
    colormap={RdBu}{
      rgb255(0pt)  =(178,24,43);
      rgb255(25pt) =(239,138,98);
      rgb255(50pt) =(247,247,247);
      rgb255(75pt) =(103,169,207);
      rgb255(100pt)=(33,102,172);
    },
    colormap name=RdBu,
  }
}

\definecolor{darkblue}{rgb}{0, 0, 0.5}
\hypersetup{colorlinks=true, citecolor=darkblue, linkcolor=darkblue, urlcolor=darkblue}

\begin{document}


\pgfplotstableread{
	x  y  val
	1  5  -2.0
	2  5  +2.5
	3  5  +1.0
	4  5  -1.0
	5  5  +2.0
	6  5  -4.5
	1  4  +2.5
	2  4  +1.0
	3  4  +2.5
	4  4  +0.0
	5  4  -1.0
	6  4  -2.0
	1  3  +0.0
	2  3  +0.5
	3  3  +0.1
	4  3  +0.0
	5  3  +1.5
	6  3  -3.4
	1  2  +1.5
	2  2  -1.0
	3  2  +1.0
	4  2  +0.5
	5  2  -1.5
	6  2  -1.5
	1  1  +0.0
	2  1  +1.0
	3  1  +0.6
	4  1  +0.5
	5  1  +0.0
	6  1  -1.5
}\hmdata

\title{The Structural Attention Tax: How Retrieval Format\\Hijacks In-Context Learning Independent of Content}

\author{Yuqi Zhang, Di Zhang \\
Xi'an Jiaotong-Liverpool University \\
\texttt{yuqi.zhang23@student.xjtlu.edu.cn}
}

\maketitle

\begin{abstract}
	Retrieval-augmented generation (RAG) systems inject external knowledge
	to improve LLM outputs, yet the \emph{format} of injected
	content---distinct from its semantic relevance---can independently
	distort the model's attention distribution.
	We identify and formalise a phenomenon we term the
	\textbf{structural attention tax}: knowledge graph (KG) triples, due
	to their relational delimiters and repeated slot patterns, capture
	$2$--$3\times$ more attention per token than semantically equivalent
	natural-language text ($\hat{\sigma}(\text{KG}) \approx 0.70$ vs.\
	$\hat{\sigma}(\text{neutral}) \approx 0.25$), compressing
	demonstration attention by up to $42\%$---\emph{regardless of whether
		the triples are relevant or noise}.
	We develop a formal framework decomposing attention scores into
	semantic and structural components
	(Eq.~\ref{eq:structural_decomp}), derive a compression bound
	(Proposition~\ref{prop:compression}) connecting token-level format
	bias to demonstration attention loss, and show that the structural
	term governs \emph{how much} attention is diverted while the semantic
	term governs \emph{whether} this helps or hurts.
	This decoupling reveals two orthogonal axes for improving
	retrieval-augmented ICL: optimising retrieval quality (semantic axis)
	and reducing format-driven attention capture (structural axis).
	Empirically, across two model families (Mistral-7B, LLaMA-3-8B) and
	three QA benchmarks, we observe that source--task alignment dominates:
	task-matched BM25 retrieval achieves $58$--$62\%$ on HotpotQA vs.\
	ConceptNet's $25$--$27\%$, a $>$30\,pp gap that dwarfs all gating
	strategies ($\leq$2\,pp).
	We derive five structure-aware mitigation strategies from the
	framework, ranging from zero-cost prompt modifications to
	training-time regularisation; format flattening (S3) is validated by
	both accuracy and attention-level evidence from a verbalized-triple
	control, while structural dispersal (S1) yields mixed results that
	illuminate the challenges of format-level intervention.
\end{abstract}

\section{Introduction}

Retrieval-augmented generation (RAG)~\citep{jiang2023flare,sui2025fidelis}
has become a standard strategy for grounding LLM outputs in external
knowledge.
The dominant concern in RAG research is \emph{what} to retrieve:
selecting relevant passages~\citep{parry2024icl}, gating on
confidence~\citep{jiang2023flare}, or chaining knowledge graph facts
into reasoning traces~\citep{sui2025fidelis}.
Far less attention has been paid to a complementary question:
\emph{how does the format of retrieved content interact with the
	transformer's attention mechanism, independent of whether that content
	is semantically useful?}

We show that this question matters more than it might appear.
In transformer-based in-context learning
(ICL)~\citep{brown2020language,chen2025schema}, all prompt regions
compete for the same fixed attention budget.
When knowledge graph triples are injected into the prompt, their
distinctive structure---relational delimiters, repeated slot patterns,
high token-level regularity---creates a \emph{format-driven bias} in
attention allocation that operates independently of the triples'
semantic content.
We term this phenomenon the \textbf{structural attention tax}: a
systematic over-allocation of attention to structurally salient prompt
regions, with a corresponding under-allocation (compression) of
attention to demonstrations and other task-critical context.

Our central theoretical contribution is a decomposition of attention
scores into semantic and structural components
(Section~\ref{sec:theory}; Figure~\ref{fig:framework}), yielding a
formal characterisation of how format bias interacts with content
relevance:
\begin{itemize}
	\item The \textbf{structural component}
	      $\lambda \cdot \sigma(K)$ determines the \emph{magnitude} of
	      attention diversion---how many attention units are taxed away from
	      demonstrations.
	\item The \textbf{semantic component}
	      $\bar{s}_K^{\text{sem}}$ determines the \emph{sign} of the
	      performance effect---whether diverted attention carries useful signal
	      or noise.
\end{itemize}
This decoupling implies that optimising \emph{what} to retrieve
(semantic axis) and reducing \emph{format-driven capture}
(structural axis) are orthogonal improvement strategies, a perspective
that unifies several previously disconnected
observations~\citep{shi2023llm,wu2024faithful,liu2024rag}.

We develop this framework through four contributions:

\begin{enumerate}
	\item \textbf{The structural attention tax framework}
	      (Section~\ref{sec:theory}): a formal decomposition of attention
	      competition in augmented ICL, yielding four testable predictions and a
	      provable compression bound
	      (Proposition~\ref{prop:compression}).

	\item \textbf{Empirical validation of the format--content
		      decoupling} (Section~\ref{sec:results}): using a seven-condition
	      study across two model families (Mistral-7B, LLaMA-3-8B) and three QA
	      tasks, we show that KG triples absorb $2$--$3\times$ more attention
	      per token than neutral text, with noise and relevant triples
	      exhibiting nearly identical attention patterns---confirming that the
	      structural tax is format-driven, not content-driven.

	\item \textbf{A source-alignment dominance result}
	      (Section~\ref{sec:bm25}): task-matched BM25 retrieval outperforms
	      mismatched ConceptNet retrieval by $>$30\,pp on HotpotQA,
	      demonstrating that source selection along the semantic axis dwarfs
	      gating sophistication ($\leq$2\,pp). This result is currently limited
	      to one task and confounded by retrieval-unit differences
	      (Section~\ref{sec:limitations}).

	\item \textbf{Five structure-aware mitigation strategies}
	      (Section~\ref{sec:mitigation}): derived from the framework, targeting
	      the structural term $\lambda \cdot \sigma(K)$ through prompt
	      modification, logit suppression, and training-time regularisation.
	      Two strategies are empirically evaluated: S3 (format flattening) is
	      supported by both accuracy and attention-level evidence
	      (Appendix~\ref{app:c5b}); S1 (structural dispersal) yields mixed
	      results with model-dependent effects
	      (Appendix~\ref{app:s1_dispersal}).
	      The remaining three are mathematically grounded but untested.
\end{enumerate}

\textbf{Scope:}\;
We do not claim the structural attention tax makes KG augmentation
universally harmful; we argue its existence as an independent,
format-driven cost has been overlooked
(Section~\ref{sec:limitations}).

\section{Related Work}

\textbf{In-Context Learning.}\;
\citet{brown2020language} showed LLMs generalise from demonstrations
without gradient updates.
Demonstration format~\citep{min2022rethinking}, skill
matching~\citep{an2023skill}, and schema-structured
prompts~\citep{chen2025schema} affect performance.
Parametric fact recall degrades before ICL ability under
compression~\citep{jin2023cost}.
\citet{parry2024icl} frame ICL as applied information retrieval.

\textbf{KG Augmentation and RAG.}\;
Pipelines range from triple injection~\citep{li2023fewshot} to
graph-based reasoning~\citep{huang2023prodigy}.
FLARE~\citep{jiang2023flare} gates retrieval on confidence;
FiDeLiS~\citep{sui2025fidelis} chains KG facts into verifiable traces.
\citet{zheng2023edit} show KG triples serve as factual overrides;
\citet{wu2024faithful} quantify the ``tug-of-war'' between parametric
priors and retrieved evidence.
\citet{liu2024rag} find RAG gains are modest when chain-of-thought
already approaches correct conclusions;
\citet{shi2023llm} show LLMs are susceptible to distraction by
irrelevant context.
While these works identify cases where retrieval can hurt, none
isolate the \emph{format-driven} component of attention distortion
from the \emph{content-driven} component---the central contribution
of our framework.

\textbf{Multi-Hop Reasoning.}\;
Chain-of-thought~\citep{wei2022chain} and backward
chaining~\citep{kazemi2023lambada} improve multi-hop accuracy.
Confidence calibration~\citep{deng2023calibration} and the
memory--reasoning distinction~\citep{jin2025disentangling} motivate
our task contrast.

\textbf{Positioning.}\;
Our work introduces the \emph{structural attention tax} as a formal
concept, provides a decomposition framework that separates format
effects from content effects in retrieval-augmented ICL, and derives
mitigation strategies grounded in this decomposition.
This complements existing work on retrieval quality and
gating~\citep{jiang2023flare,wu2024faithful} by identifying an
orthogonal axis of improvement.

\section{The Structural Attention Tax Framework}
\label{sec:theory}

\begin{figure*}[t]
	\centering
	\begin{tikzpicture}
		\def\W{12}
		\def\H{0.75}
		
		\definecolor{cI}{RGB}{180,180,180}      
		\definecolor{cD}{RGB}{52,120,246}       
		\definecolor{cK}{RGB}{255,120,40}       
		\definecolor{cQ}{RGB}{46,204,113}       
		
		\def\ya{2.1}
		
		\node[anchor=south west, font=\small\itshape, text=black!60]
		at (0, \ya+\H+0.06)
		{Natural language:\;\upshape``A dog is a common pet. Cats are also animals.''};
		
		\fill[cI] (0,    \ya) rectangle (0.60,  \ya+\H);
		\fill[cD] (0.60, \ya) rectangle (4.80,  \ya+\H);
		\fill[cK!70] (4.80, \ya) rectangle (7.80,  \ya+\H);
		\fill[cQ] (7.80, \ya) rectangle (12.00, \ya+\H);
		
		\draw[black!25] (0, \ya) rectangle (12.00, \ya+\H);
		
		\node[font=\tiny, text=white] at (0.30, \ya+\H/2) {I};
		\node[font=\scriptsize, text=white] at (2.70, \ya+\H/2)
		{Demonstrations \textbf{35\%}};
		\node[font=\scriptsize, text=black] at (6.30, \ya+\H/2)
		{Knowledge \textbf{25\%}};
		\node[font=\scriptsize, text=white] at (9.90, \ya+\H/2)
		{Query \textbf{35\%}};
		
		\node[font=\scriptsize, text=black!50] at (6.0, \ya-0.3)
		{$\updownarrow$ same knowledge content, different format};
		
		\def\yb{0}
		
		\node[anchor=south west, font=\small\itshape, text=black!60]
		at (0, \yb+\H+0.06)
		{KG triples:\;\upshape\texttt{dog|IsA|pet;\; cat|IsA|animal}};
		
		\fill[cI] (0,    \yb) rectangle (0.48,  \yb+\H);
		\fill[cD!80] (0.48, \yb) rectangle (2.88, \yb+\H);
		\fill[cK] (2.88, \yb) rectangle (8.64, \yb+\H);
		\fill[cQ] (8.64, \yb) rectangle (12.00, \yb+\H);
		
		\draw[black!25] (0, \yb) rectangle (12.00, \yb+\H);
		
		\node[font=\tiny, text=white] at (0.24, \yb+\H/2) {I};
		\node[font=\scriptsize, text=white] at (1.68, \yb+\H/2)
		{D \textbf{20\%}};
		\node[font=\small, text=black] at (5.76, \yb+\H/2)
		{\textbf{Knowledge 48\%}};
		\node[font=\scriptsize, text=white] at (10.32, \yb+\H/2)
		{Q \textbf{28\%}};
		
		\draw[dashed, black!40, line width=0.9pt]
		(4.80, \ya) -- (4.80, \yb+\H);
		
		\draw[decorate,
		decoration={brace, amplitude=5pt, mirror},
		thick, cK]
		(2.88, -0.14) -- (8.64, -0.14)
		node[midway, below=7pt, font=\small\bfseries, text=cK]
		{Structural Attention Tax};
		
	\end{tikzpicture}
	
	\caption{The structural attention tax.
		Each bar shows how the first answer token's last-layer attention
		(summing to 100\%) is distributed across four prompt regions:
		instruction~(I), demonstrations~(D), knowledge~(K), and query~(Q).
		The knowledge content is identical in both rows; only the \emph{format}
		differs.
		Presenting knowledge as KG triples (bottom) nearly doubles the
		knowledge-region attention share ($25\% \to 48\%$) and compresses
		demonstration attention from $35\%$ to $20\%$, regardless of whether
		the triples are semantically relevant.
		The dashed line marks the natural-language knowledge boundary;
		attention to its left has been ``taxed away'' from demonstrations.
		(Illustrative values; see Section~\ref{sec:results} for measured data.)}

	\label{fig:framework}
\end{figure*}

We develop a formal account of how the \emph{format} of injected
knowledge interacts with the transformer's attention mechanism,
generating four testable predictions (see Figure~\ref{fig:framework}
for an overview).
The decompositions serve as heuristic frameworks that organise
otherwise disconnected observations into a coherent theory.

\subsection{Setup and Notation}

Let $q$ denote a query with gold answer $y^*$.
The prompt is $x = [I;\, D;\, K;\, q]$ with instruction $I$,
demonstrations $D$, optional knowledge $K$, and question $q$.
Define $c_0(q) \triangleq p(y^* \mid x_{\varnothing})$ (no knowledge)
and $c_K(q) \triangleq p(y^* \mid x_K)$ (with knowledge).

\subsection{Attention Score Decomposition}
\label{sec:theory_attention}

For query token $i$ at layer $l$, the attention mass on $K$ is
$A_K^{(l)}(i) = \sum_{j \in K} \exp(s_{ij}^{(l)}) / \sum_k \exp(s_{ik}^{(l)})$.
Since attention is normalised, $A_D + A_K + A_I + A_Q = 1$.
We decompose the attention score into semantic and structural
components:
\begin{equation}
	s_{ij}^{(l)} =
	\underbrace{s_{ij}^{(l),\text{sem}}}_{\text{content-driven}}
	+ \underbrace{b_j^{(l)}}_{\text{format bias}}.
	\label{eq:score_decomp}
\end{equation}
The effective attention allocated to $K$ decomposes as:
\begin{equation}
	A_K^{(l,h)}(i) =
	\underbrace{A_K^{(l,h),\text{sem}}(i)}_{\text{semantic relevance}}
	+ \underbrace{\lambda^{(l,h)} \cdot \sigma(K)}_{\text{structural attention tax}},
	\label{eq:structural_decomp}
\end{equation}
where $\sigma(K) \in [0,1]$ quantifies \textbf{structural intensity}
(triple density, delimiter frequency, slot repetitiveness) and
$\lambda^{(l,h)}$ is a model-intrinsic bias coefficient.
The term $\lambda \cdot \sigma(K)$ is the \emph{structural attention
	tax}: attention captured by format alone, independent of whether the
content is relevant, irrelevant, or noise.

\begin{definition}[Structural capture potential]
	\label{def:sigma}
	For region $\mathcal{R}$ with $m$ tokens:
	$\sigma(\mathcal{R}) = \gamma \cdot \frac{1}{m}
		\sum_{j \in \mathcal{R}}
		\mathbb{I}[\text{token}_j \in \mathcal{P}_{\text{struct}}]
		+ \beta_{\text{rep}} \cdot \text{rep}(\mathcal{R})$,
	where $\mathcal{P}_{\text{struct}}$ is the structured-pattern token set
	(relation keywords, delimiters, slot markers) and
	$\text{rep}(\mathcal{R})$ quantifies repetitiveness.
\end{definition}

\textbf{The key insight} is that the structural and semantic
components play fundamentally different roles:
\emph{$\lambda \cdot \sigma(K)$ determines how much attention is
taxed; $\bar{s}_K^{\text{sem}}$ determines whether this tax helps or
hurts.}
This decoupling generates two orthogonal improvement axes: reducing
format-driven capture (targeting $\sigma(K)$ or $\lambda$) and
improving retrieval quality (targeting $\bar{s}_K^{\text{sem}}$).

\subsection{Demonstration Compression Bound}
\label{sec:compression}

The zero-sum constraint implies that the structural tax compresses
demonstration attention:
\begin{equation}
	A_D^{(l),\text{eff}} = A_D^{(l),\text{sem}}
	- \eta \cdot \lambda \cdot \sigma(K) \cdot
	\frac{A_D^{(l),\text{sem}}}
	{\sum_{R \neq K} A_R^{(l),\text{sem}}},
	\label{eq:demo_compression_structural}
\end{equation}
where $\eta \in [0.5, 1.0]$ is a competition coefficient.

\begin{proposition}[Demonstration compression bound]
	\label{prop:compression}
	If $K$ has $m$ tokens with mean logit $\bar{s}_K$ and $D$ has mean
	logit $\bar{s}_D$, then:
	\begin{equation}
		\frac{A_D^{(K)}}{A_D^{(0)}} \geq
		\frac{1}{1 + \frac{m}{T_0} \cdot
			\exp(\bar{s}_K - \bar{s}_D)}.
		\label{eq:compression_bound}
	\end{equation}
\end{proposition}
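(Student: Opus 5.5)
The plan is to write both attention masses as ratios of softmax partition functions and compare them directly. Fix a query token $i$ and a layer/head. Let $x_{\varnothing}=[I;D;q]$ have $T_0$ tokens, and let $Z_0=\sum_{k\in x_{\varnothing}}\exp(s_{ik})$ be its partition function. Let $Z_D=\sum_{j\in D}\exp(s_{ij})$ and $Z_K=\sum_{j\in K}\exp(s_{ij})$.

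The first modelling step is to assume that inserting $K$ leaves the logits of the non-$K$ tokens unchanged. This is the same fixed-score assumption implicit in Eq.~\ref{eq:score_decomp}, where $K$ only adds new competitors. Under it,
\[
A_D^{(0)}=\frac{Z_D}{Z_0},\qquad A_D^{(K)}=\frac{Z_D}{Z_0+Z_K},
\]
so the numerator $Z_D$ cancels in the ratio:
\[
\frac{A_D^{(K)}}{A_D^{(0)}}=\frac{Z_0}{Z_0+Z_K}=\frac{1}{1+Z_K/Z_0}.
\]
The proposition then reduces to the single inequality $Z_K/Z_0\le \frac{m}{T_0}\exp(\bar s_K-\bar s_D)$, because $t\mapsto 1/(1+t)$ is decreasing.

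Next I will fix what ``mean logit'' means, since this choice decides whether the inequality holds.

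\begin{itemize}
\item For $K$, I take $\bar s_K$ to be the log-mean-exp (effective) logit, $\bar s_K=\log\bigl(\frac1m\sum_{j\in K}e^{s_{ij}}\bigr)$. This gives $Z_K=m\,e^{\bar s_K}$ exactly. With the arithmetic mean instead, Jensen gives only $Z_K\ge m e^{\bar s_K}$, which points the wrong way. In that reading I would add a remark that the bound then holds up to the factor $\frac1m\sum_j e^{s_{ij}-\bar s_K}$, which is close to $1$ when the logits within $K$ are concentrated.
\item For the denominator I need $Z_0\ge T_0\,e^{\bar s_D}$. If the demonstration logits are representative of the baseline context, I take $\bar s_D$ to be the effective logit of $x_{\varnothing}$, and then $Z_0=T_0 e^{\bar s_D}$ holds with equality. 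More generally, it suffices that $\bar s_D\le\log(Z_0/T_0)$, meaning demonstrations are not attended above the baseline average.
\end{itemize}

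Combining the two items gives $Z_K/Z_0\le \frac{m e^{\bar s_K}}{T_0 e^{\bar s_D}}$, and substituting this into the reduction yields Eq.~\ref{eq:compression_bound}.

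I expect the main obstacle to be this second step, not the algebra. Two sub-issues need care:

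\begin{itemize}
\item The direction of the Jensen inequality forces the log-sum-exp reading of $\bar s_K$.
\item The bound $Z_0\ge T_0 e^{\bar s_D}$ is not automatic. If the instruction and query tokens receive much lower logits than the demonstrations, it can fail.
\end{itemize}

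My first attempt will be to state these two conventions explicitly as hypotheses of the proposition. If a cleaner statement is wanted, I will instead replace $T_0 e^{\bar s_D}$ by $|D|e^{\bar s_D}\le Z_0$, which always holds, with $\bar s_D$ the log-mean-exp over $D$. That version needs no assumption, at the cost of $T_0$ becoming $|D|$, which gives a weaker bound.

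Finally, I will note that summing over heads or layers preserves the inequality termwise only if the ratio bound is applied per head. So the statement should be read per $(l,h)$, consistent with Eq.~\ref{eq:structural_decomp}.
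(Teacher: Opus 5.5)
Your proposal follows the paper's proof step for step: the paper also writes $A_D^{(K)} = A_D^{(0)}\cdot Z_0/(Z_0+Z_K)$ under the same implicit fixed-logit assumption, and then combines $Z_K \le m\exp(\bar{s}_K)$ with $Z_0 \ge T_0\exp(\bar{s}_D)$. The difference is only in rigor: the paper attributes the first bound to Jensen's inequality, which, as you correctly observe, runs the other way when $\bar{s}_K$ is an arithmetic mean, and it asserts the second bound without justification. Your log-mean-exp convention for $\bar{s}_K$, together with your explicit hypothesis $\bar{s}_D \le \log(Z_0/T_0)$ (or the unconditional $|D|$ variant), is exactly what makes the argument valid.
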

\vspace{-2pt}
Incorporating the structural decomposition, $\bar{s}_K =
	\bar{s}_K^{\text{sem}} + \lambda \cdot \sigma(K)$, so:
\begin{equation}
	\frac{A_D^{(K)}}{A_D^{(0)}} \geq
	\frac{1}{1 + \frac{m}{T_0} \cdot
		\exp\!\bigl(\bar{s}_K^{\text{sem}} + \lambda \cdot \sigma(K) - \bar{s}_D\bigr)}.
	\label{eq:compression_structural}
\end{equation}
The structural term $\lambda \cdot \sigma(K)$ appears inside the
exponential, meaning even modest format bias is \emph{amplified
	exponentially} in its effect on compression.
When $\lambda \cdot \sigma(K) \gg |\bar{s}_K^{\text{sem}} - \bar{s}_D|$,
noise and relevant triples compress demonstrations nearly identically
---a signature prediction of the structural attention tax.

\subsection{Source--Task Alignment}
\label{sec:theory_info}

The influence of $K$ on the model's output decomposes into a useful
signal component and a distraction component
(Appendix~\ref{app:mi_decomposition}).
\textbf{Prediction 1 (Source dominance):} When $K$ is misaligned
(e.g., ConceptNet for Wikipedia-based questions), distraction
dominates.
The structural attention tax amplifies this: misaligned triples are
not merely uninformative but \emph{actively costly} because they
impose a format-driven attention tax on top of semantic distraction.

\subsection{Confidence-Dependent Interference}
\label{sec:theory_confidence}

The KL divergence $D_{\mathrm{KL}}(p(\cdot \mid x_K) \| p(\cdot \mid x_{\varnothing}))$ quantifies the representational shift from knowledge injection.
In the \emph{high-confidence regime} ($c_0 \to 1$), any shift leaks
mass away from $y^*$:
\begin{equation}
	c_0(q) \approx 1 \implies
	c_K(q) \leq c_0(q) -
	\underbrace{\textstyle\sum_{y \neq y^*}
		[p(y \mid x_K) - p(y \mid x_{\varnothing})]^+}_{\triangleq\,
		\ell(q, K)\,\geq\, 0}.
	\label{eq:highconf}
\end{equation}
\textbf{Prediction 2 (Confidence modulation):} The expected accuracy change is:
\begin{equation}
	\mathbb{E}[\Delta] \approx
	\underbrace{P(c_0 \ll 1) \cdot \mathbb{E}[\Delta \mid c_0 \ll 1]}_{>0}
	+
	\underbrace{P(c_0 \approx 1) \cdot \mathbb{E}[\Delta \mid c_0 \approx 1]}_{<0}.
	\label{eq:expected_delta}
\end{equation}
The structural tax exacerbates this: even when $K$ contains useful
signal, format-driven attention capture reduces the model's ability to
attend to demonstrations that provide task-critical calibration.

\subsection{Testable Predictions}
\label{sec:predictions}

The framework generates four predictions:
\textbf{P1}~(Source dominance): task-matched retrieval outperforms
mismatched;
\textbf{P2}~(Confidence modulation): KG hurts high-confidence tasks,
helps low-confidence tasks;
\textbf{P3}~(Format-invariant capture): noise and relevant triples
absorb similar attention because $\lambda \cdot \sigma(K)$ dominates;
\textbf{P4}~(Compression--performance decoupling): KG-broken and
KG-fixed samples show similar attention but divergent accuracy.

\section{Methodology}
\label{sec:methodology}

\subsection{Experimental Design}
\label{sec:expdesign}

The prompt takes the form
$x = [\text{Instr.};\; \text{Demos};\; T(q);\; q]$ with $k{=}3$
demonstrations held constant.
We design seven conditions to isolate the structural attention tax
from semantic effects:

\textbf{C1}~(Standard ICL, no external context);
\textbf{C2}~(Relevant-KG: top-3 ConceptNet triples by cosine
similarity, \texttt{all-MiniLM-L6-v2});
\textbf{C3}~(Noise-KG: unrelated triples);
\textbf{C4}~(Scalar-Gated: inject when first-token log-prob $< -0.3$);
\textbf{C5}~(Neutral-Text: length-matched neutral sentences);
\textbf{C5b}~(Verbalized triples; Appendix~\ref{app:c5b});
\textbf{C6}~(Multi-Feature Gate: dual inference, diagnostic only);
\textbf{C7}~(BM25 Wikipedia passage, HotpotQA only).

The C2/C3/C5 contrast is the key design for isolating the structural tax:
C2 and C3 share high $\sigma(K)$ but differ in semantic relevance;
C5 has low $\sigma(K)$ but matched token count.
If the structural tax exists, C2 and C3 should show similar attention
capture despite opposite semantic relevance, and both should exceed C5.

\emph{Caveat:} C2 optimises for surface similarity rather than
answer-discriminative relevance; stronger KG retrieval methods might
yield different results.

\subsection{Datasets, Models, and Metrics}
\label{sec:datasets}

We evaluate on \textbf{CommonsenseQA}~\citep{talmor2019commonsenseqa}
(five-way MC),
\textbf{HotpotQA}~\citep{yang2018hotpotqa} (multi-hop), and
\textbf{TriviaQA}~\citep{joshi2017triviaqa} (open-domain factual).
Base set: $n{=}200$; expanded to $n{=}1{,}000$ for McNemar tests.
Models: \textbf{Mistral-7B-Instruct-v0.1}~\citep{jiang2023mistral} and
\textbf{LLaMA-3-8B-Instruct}~\citep{dubey2024llama3}, both with 4-bit
NF4 quantisation and greedy decoding.
Quantisation introduces perturbations: FP16 shows $+10$\,pp on
HotpotQA C1 (Appendix~\ref{app:fp16}), so fine-grained effects
(1--3\,pp) should be interpreted cautiously.
Metrics: exact-match accuracy, first-token log-probability as
confidence proxy, McNemar's test with Bonferroni correction.

\emph{Confidence proxy caveat:} First-token log-probability conflates
knowledge confidence with format preferences and tokeniser biases;
cross-task comparability is limited.

\section{Results}
\label{sec:results}

We organise results around the four predictions of the structural
attention tax framework, leading with the core structural findings
(Predictions~3--4), then examining performance effects
(Prediction~2), and concluding with the source-alignment result
(Prediction~1).

\subsection{The Structural Tax in Action (Predictions 3--4)}
\label{sec:structural_results}

\textbf{Format-invariant attention capture (Prediction 3).}\;
Last-layer attention shows KG triples absorbing 7--10\% (Mistral) or
3--6\% (LLaMA) of attention mass \emph{regardless of relevance}, with
demonstration compression up to 42\%
(Figure~\ref{fig:attn_capture}).
Critically, C3 (noise triples) shows attention absorption comparable
to C2 (relevant triples)---7.7\% vs.\ 10.3\% for Mistral on
CSQA---confirming that the structural tax operates independently of
content (Figure~\ref{fig:c2c3_compare}).
In contrast, C5 (neutral text) absorbs only $\sim$3\%, yielding the
signature $\sim$3$\times$ ratio:
$\hat{\sigma}(\text{KG}) \approx 0.70$ vs.\
$\hat{\sigma}(\text{neutral}) \approx 0.25$.

\begin{figure}[t]
	\centering
	\begin{subcaptiongroup}
		\begin{minipage}[b]{0.48\columnwidth}
			\centering
			\includegraphics[width=\linewidth]{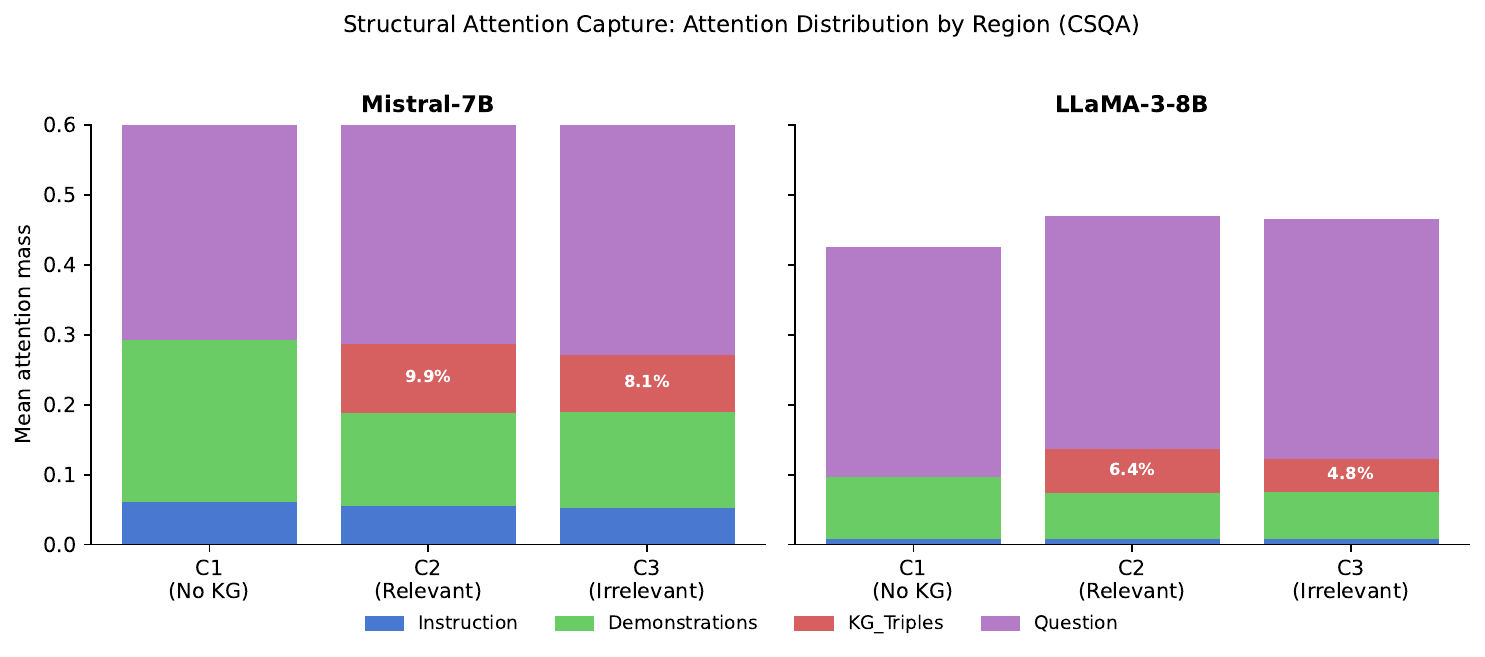}
			\subcaption{Attention by region under C1/C2/C3.}
			\label{fig:attn_capture}
		\end{minipage}\hfill
		\begin{minipage}[b]{0.48\columnwidth}
			\centering
			\includegraphics[width=\linewidth]{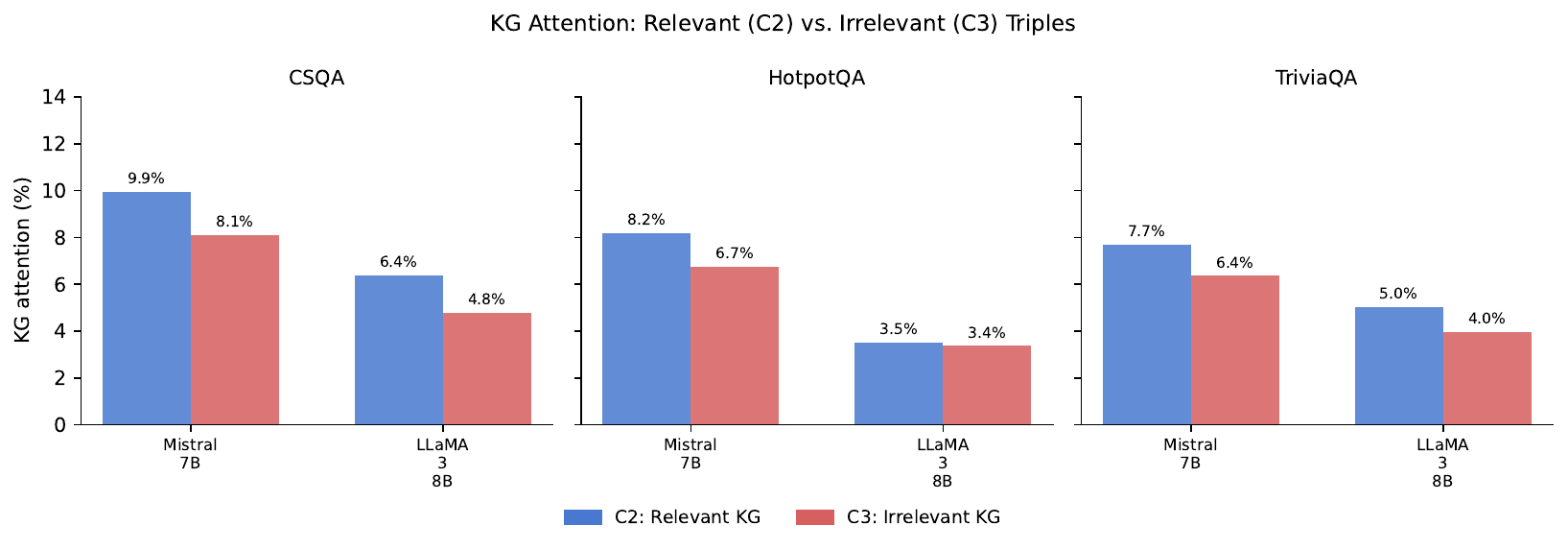}
			\subcaption{C2 vs.\ C3 KG-region attention.}
			\label{fig:c2c3_compare}
		\end{minipage}
	\end{subcaptiongroup}
	\caption{Structural attention tax evidence.
		(a)~KG triples capture 7--10\% of attention regardless of relevance,
		compressing demonstration attention by up to 42\%.
		(b)~Similar capture rates for relevant (C2) and irrelevant (C3) triples
		confirm format-driven allocation (Prediction~3).}
	\label{fig:structural_evidence}
\end{figure}

\textbf{Quantifying the tax.}\;
KG tokens receive $\approx 0.34\%$/token (Mistral) vs.\
$0.13\%$/token for demonstrations ($2.6\times$), with estimated
structural bias
$\lambda \approx 0.07$--$0.10$ (Mistral) and $0.03$--$0.06$ (LLaMA).
The exponential amplification in
Eq.~\ref{eq:compression_structural} explains why even modest $\lambda
	\cdot \sigma(K) \approx 0.05$--$0.07$ produces substantial compression.

\textbf{Compression--performance decoupling (Prediction 4).}\;
KG-broken and KG-fixed samples show structurally similar attention
redistribution patterns (Table~\ref{tab:attention}), confirming that
compression magnitude alone does not predict performance direction.
This validates the core claim of the framework: the structural tax
determines \emph{how much}; the semantic content determines
\emph{whether}.

\textbf{Neutral-text control (C5).}\;
C5 decomposes the C2 effect into crowding (C1$-$C5) and semantic
(C5$-$C2) components (most estimates 1--3\,pp, within CI at $n{=}200$).
On CSQA, neutral text has negligible effect while KG triples degrade
performance---consistent with content-driven interference
\emph{amplified by} the structural tax.
On HotpotQA, KG triples overcome crowding for a net gain.

These are correlational last-layer observations; multi-layer analysis
(Appendix~\ref{app:multilayer}) confirms KG attention elevation
persists across layers; causal validation (attention masking,
activation patching) is important future work.

\subsection{Confidence-Dependent Effects (Prediction 2)}

Table~\ref{tab:accuracy} reports accuracy across conditions
(95\% CI $\approx \pm 6.5$\,pp at $n{=}200$).
On \textbf{CSQA} (high parametric confidence), C2 shows a directional
decrease ($-2.0$ and $-1.0$\,pp), consistent with
Eq.~\ref{eq:highconf}: triples containing plausible competing entities
redistribute mass away from $y^*$.
On \textbf{HotpotQA} (low confidence), C2 improves accuracy
($+2.5/+2.0$\,pp), consistent with Eq.~\ref{eq:expected_delta}.
On \textbf{TriviaQA}, results are mixed; LLaMA shows $-4.5$\,pp.
All differences except one fall within the CI.

McNemar's test at $n{=}1{,}000$ with Bonferroni correction
($\alpha_{\text{adj}} = 0.0083$; Table~\ref{tab:mcnemar}): only
\textbf{LLaMA-3-8B HotpotQA} survives
($p_{\text{Bonf}} = 0.001$; OR\,$= 3.36$).
A sign test across all six outcomes yields $p = 0.69$
(non-significant), so the confidence-dependent pattern remains a
directional trend requiring replication.

\begin{table}[t]
	\caption{Accuracy (\%, $n=200$; 95\% CI $\approx \pm 6.5$\,pp).
		Best per pair in \textbf{bold}.}
	\label{tab:accuracy}
	\centering
	\small
	\setlength{\tabcolsep}{3.5pt}
	\begin{tabular}{llcccccc}
		\toprule
		Model & Dataset  & C1            & C2            & C3            & C4   & C5            & C6            \\
		\midrule
		\multirow{3}{*}{Mistral-7B}
		      & CSQA     & 70.0          & 68.0          & \textbf{72.5} & 70.0 & 71.5          & 70.0          \\
		      & HotpotQA & 22.0          & \textbf{24.5} & 23.0          & 22.5 & 21.0          & 23.0          \\
		      & TriviaQA & 71.5          & 72.5          & \textbf{74.0} & 71.6 & 72.5          & 72.1          \\
		\midrule
		\multirow{3}{*}{LLaMA-3-8B}
		      & CSQA     & 73.0          & 72.0          & 73.0          & 73.0 & \textbf{73.5} & \textbf{73.5} \\
		      & HotpotQA & 24.5          & \textbf{26.5} & 23.5          & 26.0 & 23.0          & 24.5          \\
		      & TriviaQA & \textbf{74.5} & 70.0          & 72.5          & 71.1 & 73.1          & 73.1          \\
		\bottomrule
	\end{tabular}
\end{table}

\begin{table}[t]
	\caption{McNemar's test (C2 vs C1, $n{=}1{,}000$).
	$\dagger$ = $p_{\text{Bonf}} < 0.05$.
	$\ddagger$ = $p_{\text{raw}} < 0.05$ only.}
	\label{tab:mcnemar}
	\centering
	\small
	\setlength{\tabcolsep}{2.6pt}
	\begin{tabular}{llcccccc}
		\toprule
		Model & Dataset          & $n$               & $n_{\text{fix}}$ & $n_{\text{brk}}$
		      & $p_{\text{raw}}$ & $p_{\text{Bonf}}$ & OR                                                                                         \\
		\midrule
		\multirow{3}{*}{Mistral-7B}
		      & CSQA             & 1000              & 40               & 61               & .046$^{\ddagger}$ & .276                      & 0.66 \\
		      & HotpotQA         & 1000              & 28               & 21               & .392              & 1.00                      & 1.33 \\
		      & TriviaQA         & 1000              & 26               & 39               & .136              & .816                      & 0.67 \\
		\multirow{3}{*}{LLaMA-3-8B}
		      & CSQA             & 1000              & 38               & 61               & .027$^{\ddagger}$ & .159                      & 0.62 \\
		      & HotpotQA         & 1000              & 37               & 11               & $\mathbf{.000}$   & $\mathbf{.001}^{\dagger}$ & 3.36 \\
		      & TriviaQA         & 1000              & 17               & 35               & .018$^{\ddagger}$ & .105                      & 0.49 \\
		\bottomrule
	\end{tabular}
\end{table}

\subsection{Source--Task Alignment Dominates (Prediction 1)}
\label{sec:bm25}

Having established that the structural tax is format-driven and
measurable, we now examine the complementary semantic axis.
Table~\ref{tab:bm25} reports C7 on HotpotQA.
Mistral achieves 61.5\% under C7 vs.\ 24.5\% under C2
($+37.0$\,pp); LLaMA achieves 58.0\% vs.\ 26.5\% ($+31.5$\,pp).
The $>$30\,pp gap is an order of magnitude larger than any gating
strategy ($\leq$2\,pp), strongly supporting Prediction~1.

In the language of our framework, Wikipedia retrieval for HotpotQA
maximises $\bar{s}_K^{\text{sem}}$ (task-aligned content) while
presenting information in coherent prose with lower $\sigma(K)$
than triple format, reducing both semantic distraction and the
structural attention tax simultaneously.

\emph{Caveats:} C2 and C7 differ in retrieval unit, token budget, and
text coherence; C7 is limited to HotpotQA; a stronger KG pipeline
might narrow the gap.
Gating strategies (C4, C6) yield $\leq$2\,pp improvements, negligible
compared to source selection (Appendix~\ref{app:oracle}).

\begin{table}[t]
	\caption{C7 on HotpotQA ($n{=}200$). The $>$30\,pp gap supports
		Prediction~1 (source dominance). C1 values differ from
		Table~\ref{tab:accuracy} (separate split).}
	\label{tab:bm25}
	\centering
	\small
	\begin{tabular}{lccc}
		\toprule
		Model      & C1   & C2 (CN) & C7 (BM25)     \\
		\midrule
		Mistral-7B & 28.0 & 24.5    & \textbf{61.5} \\
		LLaMA-3-8B & 21.5 & 26.5    & \textbf{58.0} \\
		\bottomrule
	\end{tabular}
\end{table}

\section{Structure-Aware Mitigation Strategies}
\label{sec:mitigation}

The structural attention tax framework identifies
$\lambda \cdot \sigma(K)$ as a format-driven attention cost
independent of content quality.
This suggests a principled design space for mitigation, targeting
different terms in Eq.~\ref{eq:structural_decomp}.
S3 is supported by both accuracy and attention-level evidence from the
C5b condition (Appendix~\ref{app:c5b}); S1 is empirically tested with
mixed results (Appendix~\ref{app:s1_dispersal}); the remaining three
strategies are \textbf{untested} framework-derived hypotheses.

\textbf{S1: Structural Dispersal} (targets $\sigma(K)\!\downarrow$).\;
Interleave triples with natural-language prose, reducing contiguous
structural density: $\sigma_{\text{disp}} = \sigma_{\text{orig}} /
	\rho$ where $\rho > 1$ is the dispersal factor.
Predicted compression reduction: $\Delta_D' \approx \Delta_D / \rho$.
A pilot experiment (Appendix~\ref{app:s1_dispersal}) reveals that
dispersal effects are model-dependent: LLaMA-3-8B shows the predicted
20--39\% reduction in KG attention, but Mistral-7B shows a
paradoxical \emph{increase}, and accuracy degrades in both cases
($-0.5$ to $-6.5$\,pp), suggesting that bridging phrases can
introduce new structural anchors that offset the intended dispersal.

\textbf{S2: Attention Logit Suppression} (targets $\bar{s}_K\!\downarrow$).\;
Before softmax, subtract $c > 0$ from KG-region logits:
$s_{ij}^{(l)} \leftarrow s_{ij}^{(l)} - c \cdot \mathbb{I}[j \in K]$.
This directly counteracts the structural tax:
\begin{equation}
	\frac{A_D^{(K,c)}}{A_D^{(0)}} \geq
	\frac{1}{1 + \frac{m}{T_0} \cdot
		\exp(\bar{s}_K - c - \bar{s}_D)}.
	\label{eq:suppressed_bound}
\end{equation}
With $\bar{s}_K - \bar{s}_D \approx 0.96$, $c \in [0.5, 1.5]$ should
substantially reduce compression while preserving semantic signal.

\textbf{S3: Format Flattening} (targets $\sigma(K)\!\downarrow$).\;
Convert triples to natural sentences, reducing $\sigma(K)$
by removing delimiter patterns and slot structure.
The C5b condition (Appendix~\ref{app:c5b}) provides direct support:
verbalized triples not only maintain accuracy on 4 of 6
model--task pairs, but also reduce KG-region attention by
17--29\% on LLaMA-3-8B (ratio C5b/C2 $\approx 0.71$--$0.83$),
confirming that format flattening lowers the structural tax while
preserving semantic content.
Extension: interrogative form (e.g., ``\textit{Have you considered
	that a rug is often on a floor?}'') further aligns with demonstration
style.

\textbf{S4: Confidence-Modulated Injection} (targets effective attention).\;
Define trust coefficient $\mu(q) \in [0,1]$:
$A_K^{\text{eff}} = \mu(q) \cdot A_K^{\text{sem}} + (1{-}\mu(q))
	\cdot \lambda \cdot \sigma(K)$.
A meta-instruction (``\textit{These facts are for reference only}'')
encourages high $\mu(q)$ for confident queries.

\textbf{S5: Structural Adversarial Regularisation} (targets $\lambda\!\downarrow$).\;
During fine-tuning, penalise attention to noise triples:
\begin{equation}
	\mathcal{L}_{\text{struct}} = \alpha_{\text{reg}} \sum_l \sum_i
	\frac{A_K^{(l)}(i; K_{\text{noise}})}{A_D^{(l)}(i) + \epsilon}.
	\label{eq:adversarial_reg}
\end{equation}
This directly reduces $\lambda$, the model-intrinsic format bias---the
most expensive but most durable approach.

\begin{table}[t]
	\caption{Mitigation strategies derived from the structural attention
		tax framework.  S3 is supported by C5b accuracy and attention evidence
		(Appendix~\ref{app:c5b}); S1 shows mixed results
		(Appendix~\ref{app:s1_dispersal}); S2/S4/S5 are untested.}
	\label{tab:strategies}
	\centering
	\small
	\setlength{\tabcolsep}{3pt}
	\begin{tabular}{llcccc}
		\toprule
		Strategy        & Target                  & Cost           & Access    & Eq.                       & Status    \\
		\midrule
		S1 Dispersal    & $\sigma(K)\!\downarrow$ & None           & Prompt    & --                        & Mixed     \\
		S2 Logit suppr. & $\bar{s}_K\!\downarrow$ & Min.           & Inference & \ref{eq:suppressed_bound} & Untested  \\
		S3 Flattening   & $\sigma(K)\!\downarrow$ & None           & Prompt    & --                        & Supported \\
		S4 Conf.\ mod.  & $\mu(q)\!\uparrow$      & $1$--$2\times$ & Prompt    & --                        & Untested  \\
		S5 Adv.\ reg.   & $\lambda\!\downarrow$   & Training       & Weights   & \ref{eq:adversarial_reg}  & Untested  \\
		\bottomrule
	\end{tabular}
\end{table}

Table~\ref{tab:strategies} summarises all five strategies.
The strategies form a cost--effectiveness ladder: S1/S3 require only
prompt modification, S2 requires logit access, S4 requires dual
inference, S5 requires fine-tuning.
This structured design space is a direct consequence of the
decomposition in Eq.~\ref{eq:structural_decomp}: each strategy
targets a specific term, enabling principled selection based on
deployment constraints.
Notably, the contrasting outcomes of S1 (mixed) and S3 (supported)
highlight that \emph{how} $\sigma(K)$ is reduced matters: eliminating
structural patterns entirely (S3) is more reliable than diluting them
with additional tokens that may themselves become attention anchors (S1).

\section{Discussion}

\textbf{The structural attention tax as a unifying concept.}\;
Prior work has noted that retrieval can hurt~\citep{shi2023llm},
that parametric and retrieved knowledge compete~\citep{wu2024faithful},
and that RAG gains diminish when models are already
confident~\citep{liu2024rag}.
Our framework unifies these observations by identifying a single
mechanism---format-driven attention capture---that operates
orthogonally to content quality.
The $\sim$3$\times$ ratio between KG-format and neutral-text attention
capture ($\hat{\sigma}(\text{KG}) / \hat{\sigma}(\text{neutral})$)
confirms that triple format elevates attention independently of
content, and the exponential amplification in
Eq.~\ref{eq:compression_structural} explains why this modest bias
produces substantial downstream effects.

\textbf{Two orthogonal axes.}\;
The $>$30\,pp BM25 gap demonstrates the semantic axis; the
$\sim$3$\times$ structural ratio identifies an untapped structural
axis.
Current RAG research focuses almost exclusively on the former; our
framework motivates systematic attention to the latter.

\textbf{Practical guidelines.}\;
(1)~Match knowledge source to task---our strongest finding, though
currently limited to one task.
(2)~If parametric confidence is high, avoid KG injection from
mismatched sources.
(3)~Apply format flattening (zero-cost, empirically supported) rather
than structural dispersal (which may introduce new attention anchors).
(4)~Use logit suppression when attention access is available.
(5)~For deployment, consider structural adversarial regularisation.

\textbf{Broader implications.}\;
The structural attention tax may extend to any prompt component with
distinctive formatting (SQL, JSON, code blocks), suggesting that
format normalisation should be a standard preprocessing step in RAG
pipelines.

\section{Limitations}
\label{sec:limitations}

\textbf{Statistical power:} Only one of six Bonferroni-corrected
comparisons is significant ($p = 0.69$ sign test); the
confidence-dependent pattern is a directional trend.
\textbf{Retrieval:} Cosine-similarity ConceptNet retrieval is
relatively weak; BM25 evaluated only on HotpotQA.
\textbf{Scale:} Two 7B/8B models, 4-bit NF4 quantisation
(FP16 shows $+10$\,pp on HotpotQA C1), greedy decoding.
\textbf{Attention:} Last-layer, correlational, no causal intervention.
\textbf{Theory:} Eq.~\ref{eq:mi_decomp} is a heuristic;
Eq.~\ref{eq:structural_decomp} assumes additive separation.
\textbf{Mitigation:} S3 is supported by both accuracy and attention
evidence; S1 yields mixed results with model-dependent effects;
the remaining three strategies (S2, S4, S5) are untested.
\textbf{Generality:} Demonstrated only for KG triple format.
See Appendix~\ref{app:extended_limitations} for extended discussion.

\section{Conclusion}

We have introduced the \textbf{structural attention tax}: a
format-driven mechanism by which structured prompt regions (such as
knowledge graph triples) capture disproportionate attention
independent of their semantic content, compressing demonstration
attention by up to 42\%.
Our formal framework decomposes attention competition into semantic and
structural components, revealing that these two axes govern different
aspects of retrieval-augmented ICL: the semantic term determines
\emph{whether} augmentation helps; the structural term determines
\emph{how much} attention is taxed.

This decoupling yields three actionable insights.
First, source--task alignment along the semantic axis dominates:
task-matched BM25 retrieval achieves $58$--$62\%$ on HotpotQA vs.\
ConceptNet's $25$--$27\%$ ($>$30\,pp gap).
Second, the structural tax is real and measurable:
$\hat{\sigma}(\text{KG}) \approx 3\times \hat{\sigma}(\text{neutral})$,
with noise and relevant triples showing comparable attention capture.
Third, the framework generates a principled design space of five
mitigation strategies; empirical evaluation of two prompt-level
strategies reveals that format flattening (S3) effectively reduces the
structural tax (17--29\% KG attention reduction on LLaMA-3-8B) while
preserving accuracy, whereas structural dispersal (S1) produces
model-dependent effects with accuracy degradation, highlighting
that the design of format-level interventions requires care to avoid
introducing new structural anchors.

\textbf{Future work:} (i)~causal interventions (attention masking,
activation patching) to validate the structural tax mechanism;
(ii)~extend C7 to other tasks;
(iii)~empirically evaluate the remaining three mitigation strategies
(S2, S4, S5);
(iv)~investigate the structural tax for other formatted prompt
components (SQL, JSON, code blocks).

\bibliographystyle{plainnat}
\bibliography{references}

\clearpage
\appendix
\section*{Appendix}

\section{Experimental Setup and Validation}
\label{app:setup_validation}

\subsection{Extended Condition Descriptions}
\label{app:conditions}

\textbf{C2:} Triples ranked by cosine similarity
(\texttt{all-MiniLM-L6-v2}), rendered as natural language.
\textbf{C3:} Shares relational format with C2
($\sigma(\text{C3}) \approx \sigma(\text{C2})$), enabling Prediction~3
testing.
\textbf{C4:} $\tau = -0.3$ via grid search on 50 held-out samples.
\textbf{C5:} Low $\sigma(\text{C5}) \ll \sigma(\text{C2})$; does not
control for syntactic form.
\textbf{C6:} $\tau_a$, $\delta$ via 5-fold CV.
\textbf{C7:} HotpotQA only.
Entity extraction: spaCy NER + noun-chunk detector.

\subsection{FP16 vs.\ INT4 Quantisation Comparison}
\label{app:fp16}

Table~\ref{tab:fp16} compares FP16 and INT4 accuracy on a subset of HotpotQA.
The $+10$\,pp gap under C1 indicates that quantisation substantially affects
baseline performance, warranting caution when interpreting small effect sizes
in the main experiments.

\begin{table}[htbp]
	\caption{FP16 vs.\ INT4 (Mistral-7B, HotpotQA, $n{=}50$).}
	\label{tab:fp16}
	\centering
	\small
	\begin{tabular}{lccc}
		\toprule
		Condition & INT4 (\%) & FP16 (\%) & $\Delta$ \\
		\midrule
		C1        & 22.0      & 32.0      & $+10.0$  \\
		C2        & 24.5      & 28.0      & $+3.5$   \\
		\bottomrule
	\end{tabular}
\end{table}

\subsection{Alias-Aware Evaluation}
\label{app:alias}

Alias matching yields uniform improvements ($\leq$3\,pp) not altering
conclusions.

\subsection{SARP Threshold Stability}
\label{app:sarp}

Bootstrap stability (200 resamples) confirms $\tau=-0.3$ is almost
never optimal ($<4\%$); modal: $\tau=-0.05$ (62--100\%).

\section{Supplementary Results}
\label{app:supp_results}

\subsection{Additional Main Results}
\label{app:additional_results}

This section reports supplementary metrics that complement the main accuracy
results.
Table~\ref{tab:logprob} reports mean answer log-probabilities, serving as
a confidence proxy across conditions.
Table~\ref{tab:errorflow} traces per-sample error transitions from C1 to C2.
Figure~\ref{fig:heatmap} visualises accuracy deltas across all conditions as
a heatmap.
Table~\ref{tab:mcnemar500} reports McNemar tests at $n \approx 500$ as an
intermediate power check.

\begin{table}[htbp]
	\caption{Answer log-probability (mean), $n=200$.}
	\label{tab:logprob}
	\centering
	\small
	\begin{tabular}{llcccc}
		\toprule
		Model & Dataset  & C1       & C2       & C3       & C4       \\
		\midrule
		\multirow{3}{*}{Mistral-7B}
		      & CSQA     & $-0.005$ & $-0.001$ & $-0.002$ & $-0.005$ \\
		      & HotpotQA & $-0.635$ & $-0.569$ & $-0.603$ & $-0.535$ \\
		      & TriviaQA & $-0.201$ & $-0.212$ & $-0.196$ & $-0.186$ \\
		\midrule
		\multirow{3}{*}{LLaMA-3-8B}
		      & CSQA     & $-0.002$ & $-0.004$ & $-0.003$ & $-0.002$ \\
		      & HotpotQA & $-0.994$ & $-1.142$ & $-1.096$ & $-1.113$ \\
		      & TriviaQA & $-0.331$ & $-0.384$ & $-0.340$ & $-0.329$ \\
		\bottomrule
	\end{tabular}
\end{table}

\begin{table}[htbp]
	\caption{Per-sample error flow (C1 $\to$ C2), $n{=}200$.}
	\label{tab:errorflow}
	\centering
	\small
	\begin{tabular}{llccccc}
		\toprule
		Model & Dataset  & C1 Err. & C2 Err. & Fixed & Broken & Net               \\
		\midrule
		\multirow{3}{*}{Mistral-7B}
		      & CSQA     & 60      & 64      & 14    & 18     & $-4$ ($-2.0$\,pp) \\
		      & HotpotQA & 156     & 151     & 8     & 3      & $+5$ ($+2.5$\,pp) \\
		      & TriviaQA & 56      & 54      & 7     & 5      & $+2$ ($+1.0$\,pp) \\
		\midrule
		\multirow{3}{*}{LLaMA-3-8B}
		      & CSQA     & 54      & 56      & 13    & 15     & $-2$ ($-1.0$\,pp) \\
		      & HotpotQA & 151     & 147     & 6     & 2      & $+4$ ($+2.0$\,pp) \\
		      & TriviaQA & 50      & 59      & 1     & 10     & $-9$ ($-4.5$\,pp) \\
		\bottomrule
	\end{tabular}
\end{table}

\begin{figure}[htbp]
	\centering
	\begin{tikzpicture}
		\begin{axis}[
				hm colormap,
				colorbar,
				colorbar style={
						width=8pt,
						ytick={-5,-2.5,0,2.5,5},
						yticklabels={$-5$,$-2.5$,$0$,$+2.5$,$+5$},
						yticklabel style={font=\scriptsize},
						ylabel={$\Delta$ acc.\ (pp)},
						ylabel style={font=\scriptsize, yshift=2pt},
					},
				point meta min=-5, point meta max=5,
				width=0.82\textwidth, height=7.8cm,
				xmin=0.5, xmax=6.5,
				ymin=0.5, ymax=5.5,
				enlargelimits=false,
				axis on top,
				xtick={1,2,3,4,5,6},
				xticklabels={CSQA, HotpotQA, TriviaQA, CSQA, HotpotQA, TriviaQA},
				xtick align=outside,
				xticklabel style={rotate=35, anchor=north east, font=\scriptsize,
						yshift=-2pt, xshift=2pt},
				extra x ticks={1.5,2.5,3.5,4.5,5.5},
				extra x tick labels={},
				extra x tick style={grid=major, grid style={white, line width=2pt}},
				ytick={1,2,3,4,5},
				yticklabels={\scriptsize C6 Multi-Feat,
						\scriptsize C5 Neutral,
						\scriptsize C4 Scalar Gate,
						\scriptsize C3 Noise KG,
						\scriptsize C2 Rel.~KG},
				yticklabel style={font=\scriptsize\bfseries},
				extra y ticks={1.5,2.5,3.5,4.5},
				extra y tick labels={},
				extra y tick style={grid=major, grid style={white, line width=2pt}},
				tick style={draw=none},
				clip=false,
			]
			\addplot[matrix plot*, mesh/cols=6, point meta=explicit]
			table[x=x, y=y, meta=val] {\hmdata};

			\pgfplotsextra{%
				\pgfplotstableforeachcolumnelement{val}\of\hmdata\as\val{%
					\pgfplotstablegetelem{\pgfplotstablerow}{x}\of{\hmdata}
					\let\xv=\pgfplotsretval
					\pgfplotstablegetelem{\pgfplotstablerow}{y}\of{\hmdata}
					\let\yv=\pgfplotsretval
					\pgfmathsetmacro{\absval}{abs(\val)}
					\ifdim\absval pt > 2.5pt
						\def\tcolor{white}
					\else
						\def\tcolor{black!70}
					\fi
					\node[font=\scriptsize\bfseries, text=\tcolor]
					at (axis cs:\xv,\yv)
					{\pgfmathprintnumber[print sign, fixed, precision=1]{\val}};
				}%
				\draw[blue!70!black, line width=1.5pt]
				(axis cs:4.5, 4.5) rectangle (axis cs:5.5, 5.5);
				\node[font=\tiny\bfseries, text=blue!70!black, anchor=south east]
				at (axis cs:5.42, 5.42) {$\dagger$};
				\node[font=\small\bfseries, anchor=south]
				at (axis cs:2.0, 5.7) {Mistral-7B};
				\node[font=\small\bfseries, anchor=south]
				at (axis cs:5.0, 5.7) {LLaMA-3-8B};
				\draw[gray!50, thin] (axis cs:3.5,5.55) -- (axis cs:3.5,5.7);
				\draw[gray!60, dashed, thin]
				(axis cs:3.5,0.5) -- (axis cs:3.5,5.5);
			}%
		\end{axis}
	\end{tikzpicture}
	\caption{KG-gain heatmap: accuracy delta (pp) vs.\ C1.
	$\dagger$: only Bonferroni-significant result at $n{=}1{,}000$.}
	\label{fig:heatmap}
\end{figure}

\begin{table}[htbp]
	\caption{McNemar at $n \approx 500$.
		$\dagger$ = $p_{\text{Bonf}} < 0.05$.}
	\label{tab:mcnemar500}
	\centering
	\small
	\setlength{\tabcolsep}{2.6pt}
	\begin{tabular}{llcccccc}
		\toprule
		Model & Dataset          & $n$               & $n_{\text{fix}}$ & $n_{\text{brk}}$
		      & $p_{\text{raw}}$ & $p_{\text{Bonf}}$ & OR                                                                                       \\
		\midrule
		\multirow{3}{*}{Mistral-7B}
		      & CSQA             & 500               & 24               & 41               & .046            & .278                      & 0.59 \\
		      & HotpotQA         & 500               & 18               & 14               & .597            & 1.00                      & 1.29 \\
		      & TriviaQA         & 491               & 13               & 25               & .073            & .438                      & 0.52 \\
		\multirow{3}{*}{LLaMA-3-8B}
		      & CSQA             & 500               & 25               & 38               & .130            & .779                      & 0.66 \\
		      & HotpotQA         & 500               & 22               & 5                & $\mathbf{.002}$ & $\mathbf{.009}^{\dagger}$ & 4.40 \\
		      & TriviaQA         & 491               & 8                & 24               & $\mathbf{.007}$ & $\mathbf{.042}^{\dagger}$ & 0.33 \\
		\bottomrule
	\end{tabular}
\end{table}

\subsection{Neutral-Text Decomposition (Full)}
\label{app:c5_full}

Table~\ref{tab:c5_decomp} reports the full mechanism decomposition via C5,
separating the C2 effect into a crowding component (C1$-$C5, attributable to
prompt lengthening) and a semantic component (C5$-$C2, attributable to
KG-specific content).

\begin{table}[htbp]
	\caption{Mechanism decomposition via C5, $n{=}200$.}
	\label{tab:c5_decomp}
	\centering
	\small
	\setlength{\tabcolsep}{3.5pt}
	\begin{tabular}{llccccc}
		\toprule
		Model & Dataset  & C1   & C2   & C5   & Crowd. & Sem.             \\
		\midrule
		\multirow{3}{*}{Mistral-7B}
		      & CSQA     & 70.0 & 68.0 & 71.5 & $-1.5$ & $+3.5$           \\
		      & HotpotQA & 22.0 & 24.5 & 21.0 & $+1.0$ & $-3.5$           \\
		      & TriviaQA & 71.5 & 72.5 & 72.5 & $-1.0$ & $\phantom{+}0.0$ \\
		\midrule
		\multirow{3}{*}{LLaMA-3-8B}
		      & CSQA     & 73.0 & 72.0 & 73.5 & $-0.5$ & $+1.5$           \\
		      & HotpotQA & 24.5 & 26.5 & 23.0 & $+1.5$ & $-3.5$           \\
		      & TriviaQA & 74.5 & 70.0 & 73.1 & $+1.5$ & $+3.0$           \\
		\bottomrule
	\end{tabular}
\end{table}

\subsection{Multi-Feature Gating (C6) and Oracle Analysis}
\label{app:oracle}

Table~\ref{tab:oracle} reports C6 gating results alongside an oracle upper
bound.
C6 improves over C4 by at most 2\,pp in any setting, while the oracle
(which injects KG only when it helps) shows 3--7\,pp headroom---indicating
that better gating could help in principle, but source selection
(Section~\ref{sec:bm25}) remains a far larger lever.

\begin{table}[htbp]
	\caption{C6 results and oracle upper bound.}
	\label{tab:oracle}
	\centering
	\small
	\setlength{\tabcolsep}{2.8pt}
	\begin{tabular}{llcccccccc}
		\toprule
		Model & Dataset  & C1   & C2   & C4   & C6   & C6$-$C4 & Rate & Oracle & Orc.$-$C1 \\
		\midrule
		\multirow{3}{*}{Mistral}
		      & CSQA     & 70.0 & 68.0 & 70.0 & 70.0 & $\pm$0  & 1\%  & 77.0   & +7.0      \\
		      & HotpotQA & 22.0 & 24.5 & 22.5 & 23.0 & +0.5    & 49\% & 26.0   & +4.0      \\
		      & TriviaQA & 71.5 & 72.5 & 71.6 & 72.1 & +0.5    & 29\% & 75.1   & +3.6      \\
		\midrule
		\multirow{3}{*}{LLaMA}
		      & CSQA     & 73.0 & 72.0 & 73.0 & 73.5 & +0.5    & 1\%  & 79.5   & +6.5      \\
		      & HotpotQA & 24.5 & 26.5 & 26.0 & 24.5 & $-$1.5  & 16\% & 27.5   & +3.0      \\
		      & TriviaQA & 74.5 & 70.0 & 71.1 & 73.1 & +2.0    & 23\% & 75.1   & +0.6      \\
		\bottomrule
	\end{tabular}
\end{table}

\section{Attention Analysis}
\label{app:attention_combined}

\subsection{Attention Analysis}
\label{app:attention}

Tables~\ref{tab:attention} and~\ref{tab:attention_llama} report last-layer
attention distributions for the first answer token.
Key observations: (i)~demonstration attention drops substantially when KG
is present (from $\sim$23\% to $\sim$13\% for Mistral), (ii)~KG-broken
and KG-fixed samples show similar attention patterns despite opposite
accuracy outcomes, directly supporting Prediction~4.

\begin{table}[htbp]
	\caption{Attention (Mistral-7B, last layer, first answer token).}
	\label{tab:attention}
	\centering
	\small
	\setlength{\tabcolsep}{3.5pt}
	\begin{tabular}{lllcccc}
		\toprule
		Type & Dataset                   & Cond. & Instr.        & Demo & KG   & Question \\
		\midrule
		\multirow{9}{*}{\rotatebox{90}{KG-broken}}
		     & \multirow{3}{*}{CSQA}
		     & C1                        & 6.2   & \textbf{22.8} & ---  & 36.4            \\
		     &                           & C2    & 5.4           & 13.3 & 10.3 & 40.1     \\
		     &                           & C3    & 5.3           & 14.0 & 7.7  & 41.8     \\
		\cmidrule{2-7}
		     & \multirow{2}{*}{HotpotQA}
		     & C1                        & 3.1   & \textbf{15.2} & ---  & 44.1            \\
		     &                           & C2    & 3.4           & 11.8 & 8.6  & 41.8     \\
		\cmidrule{2-7}
		     & \multirow{2}{*}{TriviaQA}
		     & C1                        & 3.4   & \textbf{13.5} & ---  & 42.5            \\
		     &                           & C2    & 3.1           & 12.0 & 8.1  & 42.2     \\
		\midrule
		\multirow{6}{*}{\rotatebox{90}{KG-fixed}}
		     & \multirow{2}{*}{CSQA}
		     & C1                        & 6.1   & \textbf{23.5} & ---  & 35.1            \\
		     &                           & C2    & 5.6           & 13.4 & 9.6  & 38.6     \\
		\cmidrule{2-7}
		     & \multirow{2}{*}{HotpotQA}
		     & C1                        & 3.0   & \textbf{16.0} & ---  & 42.8            \\
		     &                           & C2    & 3.1           & 13.5 & 7.8  & 41.7     \\
		\cmidrule{2-7}
		     & \multirow{2}{*}{TriviaQA}
		     & C1                        & 3.5   & \textbf{16.4} & ---  & 42.3            \\
		     &                           & C2    & 3.3           & 13.6 & 7.2  & 44.2     \\
		\bottomrule
	\end{tabular}
\end{table}

\begin{table}[htbp]
	\caption{Attention (LLaMA-3-8B, last layer, KG-broken).}
	\label{tab:attention_llama}
	\centering
	\small
	\setlength{\tabcolsep}{3.0pt}
	\begin{tabular}{llcccc}
		\toprule
		Dataset & Cond. & Instr. & Demo         & KG  & Ques. \\
		\midrule
		\multirow{3}{*}{CSQA}
		        & C1    & 0.8    & \textbf{8.4} & --- & 32.8  \\
		        & C2    & 0.7    & 6.4          & 6.4 & 33.4  \\
		        & C3    & 0.6    & 6.5          & 4.6 & 34.3  \\
		\midrule
		\multirow{2}{*}{HotpotQA}
		        & C1    & 0.5    & \textbf{6.1} & --- & 30.3  \\
		        & C2    & 0.4    & 5.0          & 3.0 & 31.5  \\
		\midrule
		\multirow{2}{*}{TriviaQA}
		        & C1    & 0.4    & \textbf{6.6} & --- & 30.7  \\
		        & C2    & 0.5    & 5.4          & 4.9 & 32.8  \\
		\bottomrule
	\end{tabular}
\end{table}

\subsection{Multi-Layer Attention Analysis}
\label{app:multilayer}

The main-text attention analysis uses the last transformer layer only.
Here we report attention distributions at four evenly spaced layers
(0, 10, 20, and the final layer) for both models under C1, C2, and C3,
confirming that the structural attention tax is not an artefact of
last-layer dynamics.

\begin{table}[htbp]
	\caption{Multi-layer attention (\%) for Mistral-7B on CSQA (KG-broken
		samples). KG attention is consistently elevated under C2 and C3 across
		all layers.}
	\label{tab:multilayer_mistral}
	\centering
	\small
	\setlength{\tabcolsep}{3pt}
	\begin{tabular}{llcccc}
		\toprule
		Layer & Cond. & Instr. & Demo          & KG   & Ques. \\
		\midrule
		\multirow{3}{*}{0}
		      & C1    & 6.0    & \textbf{19.4} & ---  & 23.2  \\
		      & C2    & 5.2    & 14.0          & 7.6  & 22.5  \\
		      & C3    & 5.1    & 14.0          & 8.0  & 22.4  \\
		\midrule
		\multirow{3}{*}{10}
		      & C1    & 8.3    & \textbf{13.6} & ---  & 37.7  \\
		      & C2    & 7.5    & 6.5           & 6.7  & 39.1  \\
		      & C3    & 7.4    & 6.2           & 5.4  & 40.3  \\
		\midrule
		\multirow{3}{*}{20}
		      & C1    & 9.6    & \textbf{13.2} & ---  & 17.3  \\
		      & C2    & 7.1    & 11.3          & 6.9  & 18.9  \\
		      & C3    & 7.7    & 11.6          & 3.6  & 20.0  \\
		\midrule
		\multirow{3}{*}{31 (last)}
		      & C1    & 6.1    & \textbf{22.2} & ---  & 37.1  \\
		      & C2    & 5.3    & 12.8          & 10.5 & 40.1  \\
		      & C3    & 5.3    & 13.6          & 7.6  & 42.1  \\
		\bottomrule
	\end{tabular}
\end{table}

\begin{table}[htbp]
	\caption{Multi-layer attention (\%) for LLaMA-3-8B on CSQA (KG-broken
		samples).}
	\label{tab:multilayer_llama}
	\centering
	\small
	\setlength{\tabcolsep}{3pt}
	\begin{tabular}{llcccc}
		\toprule
		Layer & Cond. & Instr. & Demo          & KG  & Ques. \\
		\midrule
		\multirow{3}{*}{0}
		      & C1    & 3.1    & \textbf{23.1} & --- & 45.3  \\
		      & C2    & 2.4    & 14.8          & 9.6 & 45.5  \\
		      & C3    & 2.5    & 15.0          & 9.3 & 45.6  \\
		\midrule
		\multirow{3}{*}{10}
		      & C1    & 2.3    & \textbf{27.5} & --- & 28.3  \\
		      & C2    & 1.2    & 20.9          & 8.1 & 29.5  \\
		      & C3    & 1.4    & 21.7          & 4.5 & 30.0  \\
		\midrule
		\multirow{3}{*}{20}
		      & C1    & 0.7    & \textbf{6.8}  & --- & 16.3  \\
		      & C2    & 0.7    & 5.8           & 7.5 & 15.4  \\
		      & C3    & 0.7    & 6.0           & 2.6 & 17.2  \\
		\midrule
		\multirow{3}{*}{31 (last)}
		      & C1    & 0.8    & \textbf{8.2}  & --- & 33.2  \\
		      & C2    & 0.7    & 6.3           & 6.0 & 33.8  \\
		      & C3    & 0.7    & 6.3           & 4.2 & 34.7  \\
		\bottomrule
	\end{tabular}
\end{table}

Key observations: (i)~KG attention elevation appears from the earliest
layers (layer~0), indicating that the structural tax is established
during initial token processing, not solely a late-layer phenomenon;
(ii)~demonstration compression under C2/C3 is visible at every layer;
(iii)~the C2/C3 similarity (format-invariant capture) holds across
layers, strengthening the case that the structural tax is
format-driven rather than content-driven.

Figures~\ref{fig:multilayer_mistral} and~\ref{fig:multilayer_llama}
provide visual summaries.

\begin{figure}[htbp]
	\centering
	\includegraphics[width=\columnwidth]{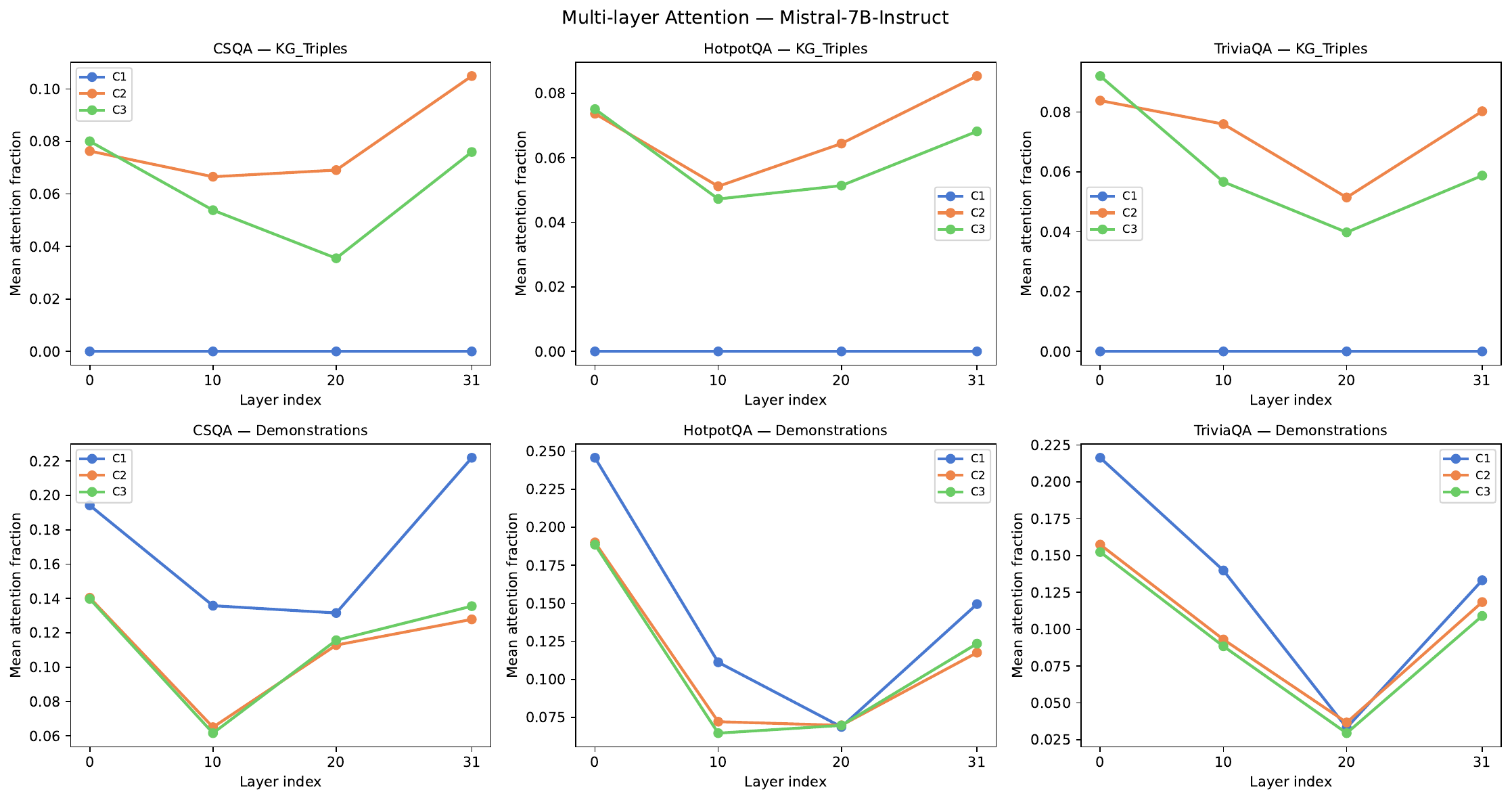}
	\caption{Multi-layer attention distribution for Mistral-7B.}
	\label{fig:multilayer_mistral}
\end{figure}

\begin{figure}[htbp]
	\centering
	\includegraphics[width=\columnwidth]{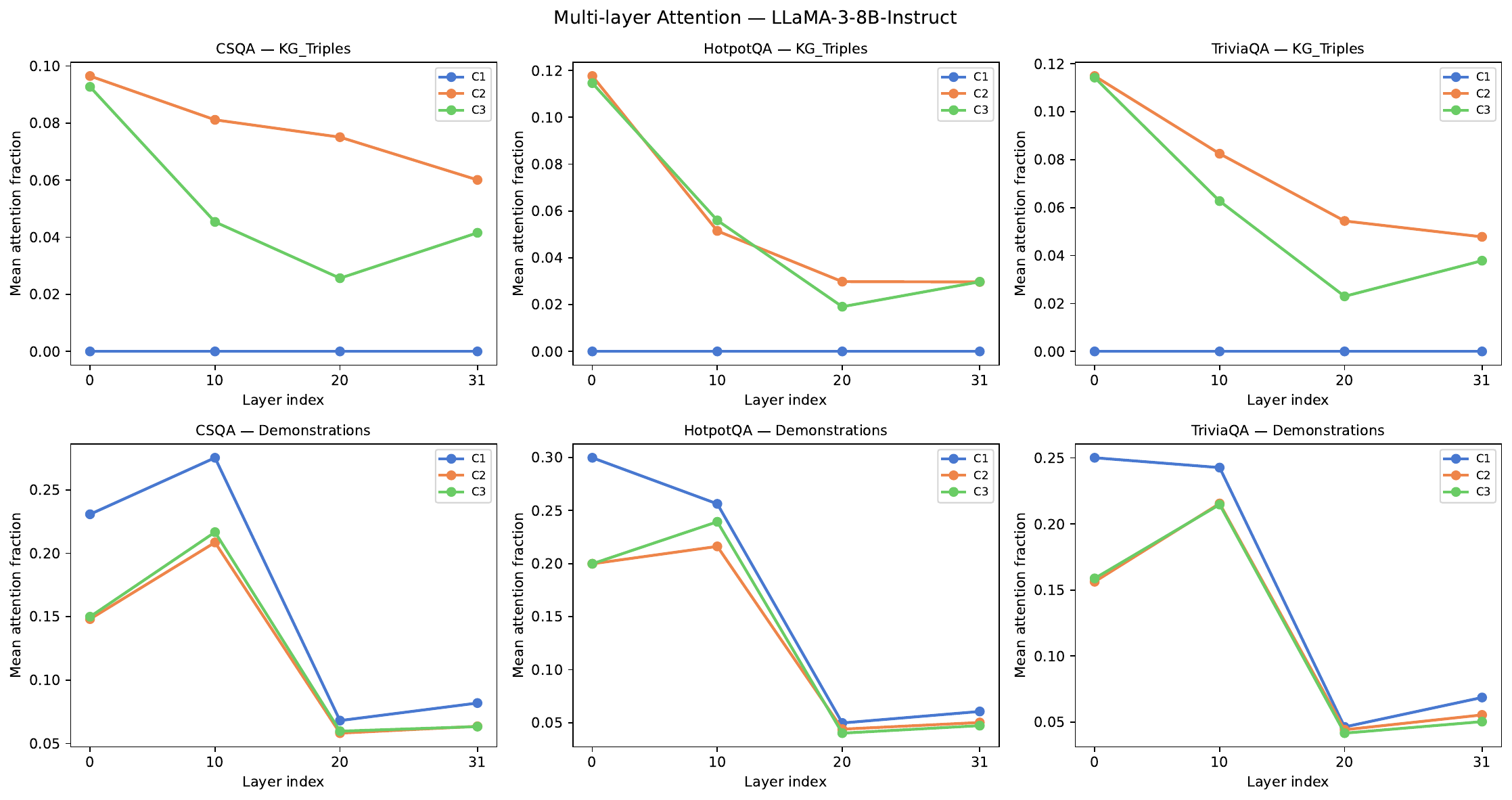}
	\caption{Multi-layer attention distribution for LLaMA-3-8B.}
	\label{fig:multilayer_llama}
\end{figure}

\section{Ablation and Robustness}
\label{app:ablation}

\subsection{$k$-Shot Ablation}
\label{app:kablation}

Table~\ref{tab:kshot} reports accuracy under C2 as the number of
demonstrations varies from $k{=}1$ to $k{=}7$.
Performance is relatively stable across $k$ values, suggesting that
the structural attention tax scales proportionally with demonstration
count rather than exhibiting sharp threshold effects.

\begin{table}[htbp]
	\caption{$k$-shot ablation: accuracy (\%) under C2.}
	\label{tab:kshot}
	\centering
	\small
	\begin{tabular}{llcccc}
		\toprule
		Model & Dataset  & $k=1$ & $k=3$ & $k=5$ & $k=7$ \\
		\midrule
		\multirow{3}{*}{Mistral-7B}
		      & CSQA     & 68.5  & 68.0  & 69.0  & 68.0  \\
		      & HotpotQA & 23.5  & 24.5  & 25.0  & 25.5  \\
		      & TriviaQA & 72.5  & 72.5  & 69.5  & 70.0  \\
		\midrule
		\multirow{3}{*}{LLaMA-3-8B}
		      & CSQA     & 69.0  & 72.0  & 71.0  & 70.0  \\
		      & HotpotQA & 24.0  & 26.5  & 25.0  & 24.5  \\
		      & TriviaQA & 71.0  & 70.0  & 72.0  & 71.5  \\
		\bottomrule
	\end{tabular}
\end{table}

\subsection{Noise Condition Validation}
\label{app:noise}

Table~\ref{tab:noise} confirms that C3 triples are genuinely unrelated to the
query: mean cosine similarity is near zero for C3 across all datasets,
compared with moderate similarity for C2.
The large Mann--Whitney effect sizes ($r = 0.37$--$0.76$) confirm effective
separation between conditions, validating the C2/C3 contrast as a clean test
of Prediction~3.

\begin{table}[htbp]
	\caption{Noise validation: cosine similarity (C2 vs.\ C3).}
	\label{tab:noise}
	\centering
	\small
	\begin{tabular}{lcccc}
		\toprule
		Dataset  & C2 sim.         & C3 sim.         & MW $p$                & $r$   \\
		\midrule
		CSQA     & $.105 \pm .090$ & $.004 \pm .008$ & $1.5 \times 10^{-43}$ & 0.760 \\
		HotpotQA & $.135 \pm .164$ & $.018 \pm .034$ & $7.8 \times 10^{-12}$ & 0.369 \\
		TriviaQA & $.118 \pm .147$ & $.020 \pm .043$ & $4.9 \times 10^{-13}$ & 0.392 \\
		\bottomrule
	\end{tabular}
\end{table}

\subsection{Difficulty-Stratified Analysis}
\label{app:difficulty}

Tables~\ref{tab:diff_mistral}--\ref{tab:diff_indep} report accuracy stratified
by question difficulty.
KG harm concentrates in the Complex tier under independent-feature
stratification, consistent with the structural attention tax hypothesis:
complex questions rely more heavily on demonstration attention, making them
more vulnerable to format-driven compression.

\begin{table}[htbp]
	\caption{Difficulty-stratified accuracy (\%) for Mistral-7B.}
	\label{tab:diff_mistral}
	\centering
	\small
	\setlength{\tabcolsep}{4.5pt}
	\begin{tabular}{llcccc}
		\toprule
		Dataset & Tier   & C1                                  & C2             & C3             & C4             \\
		\midrule
		\multirow{3}{*}{CSQA}
		        & Hard   & 57.58                               & 56.06          & \textbf{62.12} & 57.58          \\
		        & Medium & 71.21                               & 69.70          & \textbf{72.73} & 71.21          \\
		        & Easy   & 80.88                               & 77.94          & \textbf{82.35} & 80.88          \\
		\midrule
		\multirow{3}{*}{HotpotQA}
		        & Hard   & \multicolumn{4}{c}{9.09 (all tied)}                                                    \\
		        & Medium & 16.67                               & \textbf{18.18} & \textbf{18.18} & \textbf{18.18} \\
		        & Easy   & 39.71                               & \textbf{45.59} & 41.18          & 39.71          \\
		\midrule
		\multirow{3}{*}{TriviaQA}
		        & Hard   & 55.38                               & 56.92          & \textbf{61.54} & 55.38          \\
		        & Medium & 70.77                               & \textbf{72.31} & 70.77          & 70.77          \\
		        & Easy   & 88.06                               & 88.06          & \textbf{89.55} & 88.06          \\
		\bottomrule
	\end{tabular}
\end{table}

\begin{table}[htbp]
	\caption{Difficulty-stratified accuracy (\%) for LLaMA-3-8B.}
	\label{tab:diff_llama}
	\centering
	\small
	\setlength{\tabcolsep}{4.5pt}
	\begin{tabular}{llcccc}
		\toprule
		Dataset & Tier   & C1                                   & C2             & C3             & C4             \\
		\midrule
		\multirow{3}{*}{CSQA}
		        & Hard   & 60.61                                & 63.64          & \textbf{66.67} & 60.61          \\
		        & Medium & \textbf{72.73}                       & 63.64          & 69.70          & \textbf{72.73} \\
		        & Easy   & 85.29                                & \textbf{88.24} & 82.35          & 85.29          \\
		\midrule
		\multirow{3}{*}{HotpotQA}
		        & Hard   & \multicolumn{4}{c}{6.06 (all tied)}                                                     \\
		        & Medium & 22.73                                & \textbf{24.24} & 19.70          & \textbf{24.24} \\
		        & Easy   & 44.12                                & \textbf{48.53} & 44.12          & 47.06          \\
		\midrule
		\multirow{3}{*}{TriviaQA}
		        & Hard   & \textbf{44.62}                       & 33.85          & 40.00          & 33.85          \\
		        & Medium & \textbf{89.23}                       & 86.15          & 87.69          & \textbf{89.23} \\
		        & Easy   & \multicolumn{4}{c}{89.55 (all tied)}                                                    \\
		\bottomrule
	\end{tabular}
\end{table}

\begin{table}[htbp]
	\caption{Independent-feature stratification for LLaMA-3-8B.}
	\label{tab:diff_indep}
	\centering
	\small
	\setlength{\tabcolsep}{4.5pt}
	\begin{tabular}{llccccc}
		\toprule
		Dataset & Tier    & $n$ & C1            & C2            & C3            & C4            \\
		\midrule
		\multirow{3}{*}{CSQA}
		        & Simple  & 66  & 68.2          & 72.7          & \textbf{75.8} & 68.2          \\
		        & Medium  & 66  & \textbf{74.2} & \textbf{77.3} & 71.2          & \textbf{74.2} \\
		        & Complex & 68  & \textbf{76.5} & 66.2          & 72.1          & \textbf{76.5} \\
		\midrule
		\multirow{3}{*}{TriviaQA}
		        & Simple  & 65  & \textbf{78.5} & 75.4          & 75.4          & 76.9          \\
		        & Medium  & 65  & \textbf{84.6} & 80.0          & 83.1          & 80.0          \\
		        & Complex & 67  & \textbf{61.2} & 55.2          & 59.7          & 56.7          \\
		\bottomrule
	\end{tabular}
\end{table}

\section{Mitigation Strategy Experiments}
\label{app:mitigation}

\subsection{Verbalized-Triple Control (C5b)}
\label{app:c5b}

C5b converts C2 triples to natural language, testing
Strategy~S3 (format flattening).
Table~\ref{tab:c5b} reports accuracy across models and datasets.

\begin{table}[htbp]
	\caption{C5b accuracy (\%), $n{=}200$.}
	\label{tab:c5b}
	\centering
	\small
	\begin{tabular}{llcccc}
		\toprule
		Model & Dataset  & C1   & C2   & C5   & C5b  \\
		\midrule
		\multirow{3}{*}{Mistral-7B}
		      & CSQA     & 70.0 & 68.0 & 71.5 & 70.0 \\
		      & HotpotQA & 22.0 & 24.5 & 21.0 & 26.0 \\
		      & TriviaQA & 71.5 & 72.5 & 72.5 & 73.0 \\
		\midrule
		\multirow{3}{*}{LLaMA-3-8B}
		      & CSQA     & 73.0 & 72.0 & 73.5 & 72.0 \\
		      & HotpotQA & 24.5 & 26.5 & 23.0 & 24.5 \\
		      & TriviaQA & 74.5 & 70.0 & 73.0 & 71.5 \\
		\bottomrule
	\end{tabular}
\end{table}

Beyond accuracy, we compare the attention distributions of C2 and C5b
to test whether verbalization reduces the structural attention tax.
Table~\ref{tab:c5b_attn} reports KG-region and demonstration-region
attention for both conditions.

\begin{table}[htbp]
	\caption{C2 vs.\ C5b attention comparison (last layer, \%).
		Ratio $=$ C5b KG attn / C2 KG attn; values $< 1.00$ indicate that
		verbalization reduces KG-region attention capture.}
	\label{tab:c5b_attn}
	\centering
	\small
	\setlength{\tabcolsep}{3pt}
	\begin{tabular}{llccccc}
		\toprule
		Model & Dataset  & C2 $A_K$ & C5b $A_K$ & Ratio         & C2 $A_D$ & C5b $A_D$ \\
		\midrule
		\multirow{3}{*}{Mistral-7B}
		      & CSQA     & 9.9      & 9.4       & 0.95          & 13.3     & 14.0      \\
		      & HotpotQA & 8.2      & 7.4       & 0.90          & 12.7     & 11.6      \\
		      & TriviaQA & 7.7      & 8.7       & 1.14          & 12.8     & 11.4      \\
		\midrule
		\multirow{3}{*}{LLaMA-3-8B}
		      & CSQA     & 6.4      & 5.0       & \textbf{0.79} & 6.6      & 6.8       \\
		      & HotpotQA & 3.5      & 2.9       & \textbf{0.83} & 4.7      & 4.7       \\
		      & TriviaQA & 5.0      & 3.6       & \textbf{0.71} & 5.6      & 5.4       \\
		\bottomrule
	\end{tabular}
\end{table}

\textbf{LLaMA-3-8B} shows a consistent reduction in KG-region
attention across all three datasets (ratio $0.71$--$0.83$,
corresponding to $17$--$29\%$ less KG attention), while demonstration
attention remains stable ($\Delta \leq 0.2$\,pp).
This confirms that format flattening reduces the structural tax as
predicted by Strategy~S3: removing delimiter and slot patterns lowers
$\sigma(K)$ without disrupting semantic signal transmission.
\textbf{Mistral-7B} shows weaker and less consistent effects
(ratio $0.90$--$1.14$), with TriviaQA showing a slight increase,
suggesting that Mistral is less sensitive to delimiter-driven
structural capture than LLaMA.

Combining accuracy evidence (4/6 pairs where C5b $\geq$ C1) with
attention evidence (consistent reduction for LLaMA-3-8B), C5b provides
direct support for S3 as an effective zero-cost mitigation strategy,
though the effect magnitude varies across model architectures.

\subsection{Structural Dispersal Experiment (S1)}
\label{app:s1_dispersal}

We test Strategy~S1 (Structural Dispersal) by interleaving each KG
triple with a natural-language bridging phrase (e.g., ``Consider
that\ldots'', ``Additionally,\ldots''), replacing the contiguous
triple block with sentence-level separation.

\paragraph{Design.}
Each triple in the C2 prompt is wrapped with a randomly sampled
bridging phrase from a pool of seven options (``Consider that'',
``Additionally,'', ``Furthermore,'', ``Also note that'',
``It is known that'', ``Moreover,'', ``Note also that'') and
terminated with a period.
All other prompt components (instruction, demonstrations, question)
remain identical.
We run inference on the same 200 samples per dataset and compare
accuracy and last-layer attention distributions.

\paragraph{Hypothesis.}
If the structural attention tax is driven by contiguous delimiter
patterns and slot repetitiveness, dispersal should reduce KG-region
attention capture and partially restore demonstration attention.

\paragraph{Results.}
Table~\ref{tab:s1} reports accuracy and KG-region attention for C2
vs.\ S1 across both models and three datasets.

\begin{table}[htbp]
	\caption{S1 Dispersal results: accuracy (\%) and KG attention (\%).
		Ratio $=$ S1 $A_K$ / C2 $A_K$; $\uparrow$ indicates attention
		\emph{increased} (opposite to prediction), $\downarrow$ indicates the
		predicted decrease.}
	\label{tab:s1}
	\centering
	\small
	\setlength{\tabcolsep}{2.5pt}
	\begin{tabular}{llcccccc}
		\toprule
		Model & Dataset  & C2 Acc & S1 Acc & $\Delta$ & C2 $A_K$ & S1 $A_K$ & Ratio            \\
		\midrule
		\multirow{3}{*}{Mistral-7B}
		      & CSQA     & 68.0   & 67.5   & $-0.5$   & 9.9      & 11.4     & 1.15$\uparrow$   \\
		      & HotpotQA & 24.5   & 25.5   & $+1.0$   & 8.2      & 10.2     & 1.25$\uparrow$   \\
		      & TriviaQA & 72.5   & 66.0   & $-6.5$   & 7.7      & 12.2     & 1.59$\uparrow$   \\
		\midrule
		\multirow{3}{*}{LLaMA-3-8B}
		      & CSQA     & 72.0   & 71.0   & $-1.0$   & 6.4      & 3.9      & 0.61$\downarrow$ \\
		      & HotpotQA & 26.5   & 25.0   & $-1.5$   & 3.5      & 2.5      & 0.71$\downarrow$ \\
		      & TriviaQA & 70.0   & 65.5   & $-4.5$   & 5.0      & 4.0      & 0.80$\downarrow$ \\
		\bottomrule
	\end{tabular}
\end{table}

\paragraph{Analysis.}
The results reveal a striking model dependence:

\textbf{LLaMA-3-8B} shows the predicted effect direction: S1 dispersal
reduces KG-region attention by 20--39\% (ratio $0.61$--$0.80$),
confirming that contiguous structural density contributes to attention
capture in this model.
However, accuracy simultaneously degrades by $1.0$--$4.5$\,pp,
suggesting that the bridging phrases, while reducing structural
density, also interfere with the semantic signal carried by the triples.

\textbf{Mistral-7B} shows a paradoxical \emph{increase} in KG-region
attention under dispersal (ratio $1.15$--$1.59$), with the largest
increase on TriviaQA ($+59\%$) accompanied by a substantial accuracy
drop ($-6.5$\,pp).
A plausible explanation is that bridging phrases such as ``Consider
that'' and ``Furthermore'' themselves contain sentence-level structural
cues that function as new attention anchors in Mistral's attention
mechanism, effectively adding structural salience rather than
dispersing it.

\paragraph{Implications.}
The contrasting outcomes of S1 (mixed/negative) and S3 (supported;
Appendix~\ref{app:c5b}) are informative: both target $\sigma(K)$,
but S3 \emph{eliminates} structural patterns by converting to natural
prose, while S1 \emph{dilutes} them by interleaving with additional
tokens.
The S1 results suggest that the choice of bridging material matters
critically---tokens with their own semantic weight (verbs like
``Consider'', discourse markers like ``Furthermore'') can become
attention attractors rather than attention dispersers.
This highlights that effective structural tax mitigation requires
genuine format transformation (as in S3), not merely structural
dilution.

\section{Theoretical Derivations}
\label{app:theory_combined}

\subsection{Detailed Theoretical Derivations}
\label{app:theory_details}

\subsubsection{Three-Regime Decomposition}

Partition queries into three regimes based on $c_0(q)$:
\textbf{Regime I} ($c_0 > 1 - \epsilon$): Knowledge can only
redistribute mass away from $y^*$.
\textbf{Regime II} ($\epsilon < c_0 < 1 - \epsilon$): Both improvement
and degradation possible.
\textbf{Regime III} ($c_0 < \epsilon$): Near-random baseline; relevant
$K$ has high marginal value.
For CSQA, $>$70\% of samples are in Regime~I; for HotpotQA,
Regimes~II--III dominate.

\subsubsection{Proof of Proposition 1}

\begin{proof}
	Let $Z_0 = \sum_{k \notin K} \exp(s_{ik})$ and
	$Z_K = \sum_{j \in K} \exp(s_{ij})$.
	Then $A_D^{(K)} = A_D^{(0)} \cdot Z_0 / (Z_0 + Z_K)$.
	By Jensen's inequality, $Z_K \leq m \cdot \exp(\bar{s}_K)$;
	$Z_0 \geq T_0 \cdot \exp(\bar{s}_D)$, giving the result.
\end{proof}

\subsubsection{Compression Bound Calibration}
\label{app:bound_calibration}

For Mistral-7B on CSQA: $A_D^{(K)}/A_D^{(0)} = 13.3/22.8 = 0.58$.
With $T_0 \approx 350$, $m \approx 30$: $\bar{s}_K - \bar{s}_D
	\approx 0.96$.
Using the structural decomposition, $\lambda \cdot \sigma \approx
	0.05$--$0.07$, so $\bar{s}_K^{\text{sem}} - \bar{s}_D \approx
	0.89$--$0.91$.
The structural term is amplified exponentially, explaining the
$\sim$3$\times$ attention-per-token ratio.
For LLaMA-3-8B: ratio $= 0.762$, $\bar{s}_K - \bar{s}_D \approx
	0.65$ (lower $\lambda \approx 0.03$--$0.06$).

\subsubsection{Structural Capture Potential: Formal Definition}

For triples, $\sigma$ is elevated by relation keyword density, repeated
slot patterns, and low verb diversity.
$\hat{\sigma}(\text{KG}) \approx 0.70$;
$\hat{\sigma}(\text{neutral}) \approx 0.25$.

\subsection{Source--Task Alignment: Full Decomposition}
\label{app:mi_decomposition}

This appendix provides the full mutual-information decomposition
summarised in Section~\ref{sec:theory_info}.

The influence of $K$ on the model's output can be decomposed via the
chain rule of mutual information:
\begin{equation}
	I(y;\, K \mid q, D) =
	\underbrace{I(y;\, K_{\text{rel}} \mid q, D)}_{\text{useful signal}}
	+ \underbrace{I(y;\, K_{\text{irr}} \mid q, D, K_{\text{rel}})}_{\text{distraction}}.
	\label{eq:mi_decomp}
\end{equation}
Partitioning $K = K_{\text{rel}} \cup K_{\text{irr}}$ into
task-relevant and irrelevant components, the first term captures
signal and the second captures distraction.

For task-misaligned sources (e.g., ConceptNet for Wikipedia-based
questions), $K_{\text{rel}} \approx \varnothing$ and distraction
dominates:
\begin{equation}
	I(y;\, K \mid q, D) \approx
	I(y;\, K_{\text{irr}} \mid q, D) \geq 0.
	\label{eq:mi_misaligned}
\end{equation}
The structural attention tax amplifies this: even the distraction
component receives elevated attention due to
$\lambda \cdot \sigma(K)$, meaning misaligned triples are not merely
uninformative but \emph{actively costly} because they impose a
format-driven attention tax on top of semantic distraction.

\section{Extended Discussion}
\label{app:discussion}

\subsection{Extended Limitations}
\label{app:extended_limitations}

\textbf{Causality:} First-token log-prob conflates multiple factors.
\textbf{Retrieval:} Cosine similarity optimises surface match.
\textbf{Coverage:} Two 7B/8B models, 4-bit, greedy, three tasks.
\textbf{Evaluation:} Exact match may penalise correct but lexically
different answers.
\textbf{Attention:} Last-layer, correlational, no causal intervention.
\textbf{$\sigma(K)$:} Discrete pattern-matching; better understood as
relative measure.
\textbf{Strategy~S5:} Assumes stable training dynamics under the
attention penalty.
\textbf{Generality:} The structural tax concept is demonstrated only
for KG triple format in this study; extension to SQL, JSON, code, and
other structured formats is a natural but unverified hypothesis.

\subsection{Qualitative Error Analysis}
\label{app:qualitative}

\textbf{CSQA KG-Broken:} ``Where are traveling clothes often
kept?''---\texttt{bedroom closet IsA closet} names a competing
location, overriding \emph{luggage}.
High $\sigma$ ensures attention capture; $\bar{s}_K^{\text{sem}}$
aligned with wrong answer makes the structural tax actively harmful.

\textbf{HotpotQA KG-Fixed:} ``Are both Lygodium and Maxillaria
orchids?''---\texttt{lygodium IsA fern genus} supplies the missing
fact.
$\bar{s}_K^{\text{sem}}$ aligned with correct answer; the structural
tax is benign because captured attention carries useful signal.

\textbf{TriviaQA KG-Broken:} ``Rapidly boiling a liquid to make it
thicker\ldots''---triples shift from \emph{Reduction} to
\emph{Concentrating}.
The structural tax amplifies the effect of semantically misleading
content.

\section{Reproducibility: Experiment-to-Code Reference}
\label{app:reproducibility}

This section maps each experimental result in the paper to the script in the
\texttt{code/} directory that produces it.
All scripts assume the \texttt{icl\_kg} conda environment and read
model/dataset paths from \texttt{code/config.py}.
Run scripts from the \texttt{code/} directory unless otherwise noted.
The quickest end-to-end entry point is \texttt{run\_pipeline.py}
(or \texttt{run\_c5\_c6.bat} for the C5/C6 mitigation experiments only).

\subsection*{Data Preparation}
\begin{itemize}
  \item \texttt{01\_prepare\_data.py} — builds \texttt{data/csqa\_200.json},
        \texttt{hotpotqa\_200.json}, \texttt{triviaqa\_200.json}.
  \item \texttt{02\_retrieve\_kg.py} — queries ConceptNet and caches triples
        in \texttt{kg\_cache/}.
  \item \texttt{prepare\_triviaqa\_local.py} /
        \texttt{prepare\_triviaqa\_from\_rc.py} — alternative loaders for
        local or RC-format TriviaQA copies.
\end{itemize}

\subsection*{Main Accuracy Results}
\begin{itemize}
  \item \textbf{Table~1} (C1–C4 accuracy, $n{=}200$, both models, all tasks):
        \texttt{03\_run\_experiment.py} $\to$
        \texttt{results/\{model\}/\{task\}\_results.json};
        formatted by \texttt{04\_analyze\_results.py} $\to$
        \texttt{results/\{model\}/main\_table.tex}.
  \item \textbf{$k$-shot ablation} (Table~\ref{tab:kshot}):
        \texttt{03\_run\_experiment.py -{}-ablation} $\to$
        \texttt{\{task\}\_k\_ablation.json};
        table by \texttt{04\_analyze\_results.py} $\to$
        \texttt{k\_ablation\_table.tex}.
  \item \textbf{Answer log-probabilities} (Table~\ref{tab:logprob}):
        \texttt{04\_analyze\_results.py} $\to$ \texttt{logprob\_table.tex}.
  \item \textbf{Per-sample error flow} (Table~\ref{tab:errorflow}):
        \texttt{04\_analyze\_results.py} $\to$
        \texttt{analysis\_summary.json}.
  \item \textbf{KG-gain heatmap} (Figure~\ref{fig:heatmap}): inline
        \textsc{TikZ} in \texttt{main.tex}; numeric values from
        \texttt{04\_analyze\_results.py}.
\end{itemize}

\subsection*{Statistical Tests}
\begin{itemize}
  \item \textbf{McNemar at $n{=}200$} (main text) and
        \textbf{$n{\approx}500$} (Table~\ref{tab:mcnemar500}):
        \texttt{07\_statistical\_corrections.py} $\to$
        \texttt{results/statistical\_corrections.\{json,tex\}};
        expanded data from \texttt{11\_expand\_samples.py} and
        \texttt{15\_expand\_csqa.py}.
  \item \textbf{McNemar at $n{=}1{,}000$} ($\dagger$ cell in
        Figure~\ref{fig:heatmap}): \texttt{20\_expand\_n1000.py},
        then re-run \texttt{07\_statistical\_corrections.py}.
  \item \textbf{SARP threshold stability} (Section~\ref{app:sarp}):
        \texttt{12\_sarp\_stability.py} $\to$
        \texttt{results/sarp\_stability.\{json,tex\}}.
  \item \textbf{Attention Wilcoxon tests}:
        \texttt{13\_attention\_stats.py} $\to$
        \texttt{results/attention\_stats.\{json,tex\}}.
\end{itemize}

\subsection*{Attention Analysis}
\begin{itemize}
  \item \textbf{Attention tables} (Tables~\ref{tab:attention},
        \ref{tab:attention_llama}):
        \texttt{09\_attention\_analysis.py} $\to$
        \texttt{results/attention\_analysis\_\{model\}.json}.
  \item \textbf{Multi-layer attention}
        (Tables~\ref{tab:multilayer_mistral},~\ref{tab:multilayer_llama};
        Figures~\ref{fig:multilayer_mistral},~\ref{fig:multilayer_llama}):
        \texttt{14\_multilayer\_attention.py} $\to$
        \texttt{results/multilayer\_attention\_\{model\}.json} and
        \texttt{figures/fig\_multilayer\_attention\_\{model\}.pdf}.
  \item \textbf{Structural attention capture figure}
        (framework diagram, Figure~1):
        \texttt{25\_structural\_attention\_analysis.py} $\to$
        \texttt{figures/fig\_structural\_attention\_capture.pdf}.
\end{itemize}

\subsection*{Ablation and Robustness}
\begin{itemize}
  \item \textbf{Noise validation} (Table~\ref{tab:noise}):
        \texttt{08\_noise\_validation.py} $\to$
        \texttt{results/noise\_validation.\{json,tex\}}.
  \item \textbf{Difficulty stratification}
        (Tables~\ref{tab:diff_mistral}–\ref{tab:diff_indep}):
        \texttt{10\_oracle\_analysis.py} $\to$
        \texttt{results/\{model\}/difficulty\_\{task\}.tex}.
  \item \textbf{FP16 vs.\ INT4} (Table~\ref{tab:fp16}):
        \texttt{22\_fp16\_experiment.py} $\to$
        \texttt{results/fp16\_comparison.\{json,tex\}}.
  \item \textbf{Alias-aware evaluation} (Section~\ref{app:alias}):
        \texttt{23\_alias\_aware\_eval.py} $\to$
        \texttt{results/alias\_aware\_comparison.\{json,tex\}}.
\end{itemize}

\subsection*{Mitigation Experiments}
\begin{itemize}
  \item \textbf{C5 neutral-text decomposition} (Table~\ref{tab:c5_decomp}):
        \texttt{16\_neutral\_text\_c5.py} $\to$
        \texttt{results/\{model\}/\{task\}\_c5\_results.json};
        table by \texttt{18\_c5\_analysis.py} $\to$
        \texttt{results/c5\_analysis.\{json,tex\}}.
  \item \textbf{C5b verbalized triples} (Table~\ref{tab:c5b}):
        \texttt{24\_c5b\_verbalized.py} $\to$
        \texttt{results/\{model\}/\{task\}\_c5b\_results.json};
        table also produced by \texttt{18\_c5\_analysis.py}.
  \item \textbf{C5b attention comparison} (Table~\ref{tab:c5b_attn}):
        \texttt{27\_c5b\_attention\_analysis.py} $\to$
        \texttt{results/attention\_analysis\_c5b\_\{model\}.json}.
  \item \textbf{S1 structural dispersal} (Table~\ref{tab:s1}):
        \texttt{28\_s1\_dispersal\_experiment.py} $\to$
        \texttt{results/\{model\}/\{task\}\_s1\_dispersal\_results.json}
        and \texttt{results/attention\_analysis\_s1\_\{model\}.json}.
  \item \textbf{C6 multi-feature gating} (Table~\ref{tab:oracle}, C6 column):
        \texttt{17\_multifeature\_gating.py} $\to$
        \texttt{results/multifeature\_gating\_c6.\{json,tex\}}.
  \item \textbf{Oracle upper bound} (Table~\ref{tab:oracle}, Oracle column):
        \texttt{10\_oracle\_analysis.py} $\to$
        \texttt{results/oracle\_analysis.\{json,tex\}}.
  \item \textbf{BM25 retrieval baseline} (Section~\ref{sec:bm25}):
        \texttt{21\_bm25\_retrieval.py} $\to$
        \texttt{results/bm25\_comparison.\{json,tex\}}.
\end{itemize}

\subsection*{Figures and Utilities}
\begin{itemize}
  \item \textbf{Main figures} (accuracy bars, $k$-ablation lines, error-flow
        alluvial, difficulty plots, log-prob scatter):
        \texttt{05\_visualize.py} $\to$ \texttt{results/figures/}.
  \item \textbf{Qualitative error examples} (Section~\ref{app:qualitative}):
        \texttt{06\_qualitative\_analysis.py} $\to$
        \texttt{results/\{model\}/qualitative\_examples\_\{task\}.json}.
  \item \texttt{26\_consolidate\_results.py} — aggregates all result files
        into \texttt{results/RESULTS\_CONSOLIDATED.md}.
  \item \texttt{generate\_final\_results.py} — human-readable summary across
        all experiments.
\end{itemize}

\end{document}